\newtheorem{prop}{Proposition}
\def\eqref#1{equation~\ref{#1}}
\def\1{\bm{1}}
\def\cB{\mathcal{B}}
\def\cD{\mathcal{D}}
\def\cH{\mathcal{H}}
\def\cI{\mathcal{I}}
\def\cN{\mathcal{N}}
\def\rrm{{\textnormal{m}}}
\def\rv{{\textnormal{v}}}
\def\rA{{\textnormal{A}}}
\def\rE{{\textnormal{E}}}
\def\rrM{{\textnormal{M}}}
\def\rU{{\textnormal{U}}}
\def\rX{{\textnormal{X}}}
\def\rY{{\textnormal{Y}}}
\def\rZ{{\textnormal{Z}}}
\def\rva{{\mathbf{a}}}
\def\rvb{{\mathbf{b}}}
\def\rve{{\mathbf{e}}}
\def\rvu{{\mathbf{i}}}
\def\rvm{{\mathbf{m}}}
\def\rvu{{\mathbf{u}}}
\def\rvv{{\mathbf{v}}}
\def\rvx{{\mathbf{x}}}
\def\rvy{{\mathbf{y}}}
\def\rvz{{\mathbf{z}}}
\def\rvtheta{{\bm{\theta}}}
\def\rvphi{{\bm{\phi}}}
\def\vzero{{\bm{0}}}
\def\vmu{{\bm{\mu}}}
\def\valpha{{\bm{\alpha}}}
\def\vbeta{{\bm{\beta}}}
\def\vsigma{{\bm{\sigma}}}
\def\vC{{\bm{C}}}
\DeclareMathAlphabet{\mathsfit}{\encodingdefault}{\sfdefault}{m}{sl}
\SetMathAlphabet{\mathsfit}{bold}{\encodingdefault}{\sfdefault}{bx}{n}
\def\gE{{\mathcal{E}}}
\def\gG{{\mathcal{G}}}
\def\gV{{\mathcal{V}}}
\def\sC{{\mathbb{C}}}
\def\sS{{\mathbb{S}}}
\newcommand{\pdata}{p_{\rm{data}}}
\newcommand{\E}{\mathbb{E}}
\newcommand{\I}{\mathcal{I}}
\newcommand{\Ls}{\mathcal{L}}
\newcommand{\lreg}{\Ls_{\mathrm{str\_reg}}}
\newcommand{\ldist}{\Ls_{\mathrm{dist}}}
\newcommand{\pp}{p_{\rvtheta}}
\newcommand{\qq}{q_{\rvphi}}
\newcommand{\qqt}{\tilde{q}_{\rvphi}}
\newcommand{\parents}{\mathrm{\emph{Pa}}} 
\DeclarePairedDelimiterX{\infdivx}[2]{(}{)}{%
  #1\;\delimsize\|\;#2%
}
\DeclarePairedDelimiterX{\semicolondiv}[2]{(}{)}{%
  #1\;\delimsize;\;#2%
}
\newcommand{\kl}{D_{\mathrm{KL}}\infdivx}
\newcommand{\js}{D_{\mathrm{JS}}\infdivx}
\newcommand{\projkl}{\mathbb{D}\infdivx}
\newcommand{\miq}{\cI_q\semicolondiv}
\newcommand{\mynorm}[1]{\left\lVert#1\right\rVert}
\icmltitlerunning{Learning Structured Latent Factors from Dependent Data}
\begin{document}

\twocolumn[
\icmltitle{Learning Structured Latent Factors from Dependent Data:\\A Generative Model Framework from Information-Theoretic Perspective}



\icmlsetsymbol{equal}{*}

\begin{icmlauthorlist}
\icmlauthor{Ruixiang Zhang}{mila,udem}
\icmlauthor{Masanori Koyama}{pfn}
\icmlauthor{Katsuhiko Ishiguro}{pfn}
\end{icmlauthorlist}

\icmlaffiliation{mila}{Mila}
\icmlaffiliation{udem}{Université de Montréal, Montréal, Canada}
\icmlaffiliation{pfn}{Preferred Networks Inc., Tokyo, Japan}

\icmlcorrespondingauthor{Ruixiang Zhang}{ruixiang.zhang@umontreal.ca}

\icmlkeywords{Machine Learning, ICML}

\vskip 0.3in
]



\printAffiliationsAndNotice{}  

\begin{abstract}
Learning controllable and generalizable representation of multivariate data with desired structural properties remains a fundamental problem in machine learning.
In this paper, we present a novel framework for learning generative models with various underlying structures in the latent space.
We represent the inductive bias in the form of mask variables to model the dependency structure in the graphical model and extend the theory of multivariate information bottleneck~\cite{mib} to enforce it.
Our model provides a principled approach to learn a set of semantically meaningful latent factors that reflect various types of desired structures like capturing correlation or encoding invariance, while also offering the flexibility to automatically estimate the dependency structure from data.
We show that our framework unifies many existing generative models and can be applied to a variety of tasks, including multi-modal data modeling, algorithmic fairness, and out-of-distribution generalization.
\end{abstract}

\section{Introduction}
\label{sec:intro}

Learning structured latent representation of multivariate data is a fundamental problem in machine learning.
Many latent variable generative models have been proposed to date based on different inductive biases that reflect the model's assumptions or people's domain knowledge.
For instance, the objectives of the family of  $\beta$-VAEs~\cite{betavae, tcvae, kimdisentangle} try to enforce a coordinate-wise independent structure among latent variables to discover disentangled factors of variations.

While these methods have been proven useful in the field of applications on which they were evaluated, most of them are built-in rather heuristic ways to encode the desired structure. One usually needs to construct an entirely different model whenever the domain of application changes.  In general, the type of inductive bias differs significantly across different applications. It is a burden to craft a different architecture for each application, and there have not been many studies done for the general and unified way of explicitly representing an inductive bias to be enforced in generative models.  

In this paper, we propose a framework of generative model that can represent various types of inductive biases in the form of Bayesian networks.
Our method can not only unify many existing generative models in previous studies, but it also can lead to new insights about establishing connections between different models across different domains and extending them to new applications.

We summarize our contributions in this work as: (i)~We propose a novel general framework of probabilistic generative model with explicit dependency structure representation to learn structured latent representation of multivariate data. (ii)~We propose an information-theoretic training objective by generalizing the multivariate information bottleneck theory to encode prior knowledge or impose inductive bias.~(Sec.~\ref{sec:framework_mib}) (iii)~We propose a flexible and tractable inference model with linear number of inference networks coupled with super-exponential number of possible dependency structures to model exponential number of inference distributions.~(Sec.~\ref{sec:framework_inference}) (iv)~We show that our proposed framework unifies many existing models and demonstrate its effectiveness in different application tasks,  including multi-modal data generative modeling, algorithmic fairness, and out-of-distribution generalization.

\section{Background}
\subsection{Notations}
\label{sec:bg_notation}
We use capital letters (i.e. $\rX \equiv \rX_{1:N}$ ) to denote a vector of $N$ random variables, and lower case letters (i.e. $\rvx$) for the values. We use $P(\rX)$ to denote the probability distribution and corresponding density with $p(\rvx)$. Given a set $\sS \subseteq \left\{ 1, 2, \ldots, N\right\}$ of indexes, we use $\rX^\sS \equiv  \left[\rX_i \right]_{i \in \sS}$ to represent corresponding subset of random variables. Similar notation is used for binary indicator vector $\rvb$ that $\rX^\rvb \equiv  \left[\rX_i \right]_{\rvb_i =1}$.

\subsection{Probability and information theory}
A \textbf{Bayesian network} $\gG \equiv \left( \gV, \gE\right)$ defined over random variables $\rX$ is a directed acyclic graph, consisting of a set of nodes $\gV \equiv \left\{ \rX_i \right\}_{i=1}^N$ and a set of directed edges $\gE \subseteq \gV^2$.
A node $\rvu$ is called a \emph{parent} of $\rvv$ if $(\rvv, \rvu) \in \gE$, and for each random variable $\rX_i$, the set of parents of $\rX_i$ is denoted by $\parents^{\gG}_{\rX_i}$.
We use $\gG^{\emptyset}$ to denote an empty Bayesian network $\gG^{\emptyset} \equiv (\gV, \emptyset)$.
If a distribution $P(\rX)$ is consistent with a Bayesian network $\gG$, then it can be factorized as $p(\rvx)=\prod_i p(\rvx_i \mid \parents^{\gG}_{\rvx_i})$, denoted by $p \models \gG$.

We then briefly introduce the information theory concepts used in this paper here. 
The Shannon \textbf{Entropy} is defined as $\cH (\rX) =  - \E_{p(\rvx)}\log p(\rvx)$ to measure the average number of bits needed to encode values of $\rX \sim P(\rX)$.
The \textbf{Kullback–Leibler Divergence}~(KLD) is one of the most fundamental distance between probability distributions defined as $\kl{P}{Q} = \E_{p}\log \frac{p}{q}$.
\textbf{Mutual Information} $\cI (\rX; \rY) = \E_{p(\rvx, \rvy)} \log \frac{p(\rvx, \rvy)}{p(\rvx)p(\rvy)}$ quantifies the mutual dependence between two random variables $\rX$ and $\rY$. The mutual information is zero if and only if $\rX$ and $\rY$ are independent.
\textbf{Multi-Information} is one of multivariate mutual information  defined as $\cI (\rX_{1}, \dots, \rX_{N}) = \kl{p(\rvx_{1:N})}{\prod_{i=1}^N p(\rvx_i)}$, which generalizes the mutual information concept to quantify the multivariate statistical independence for arbitrary number of random variables.
\cite{jsd} proposed a \textbf{generalized Jensen-Shannon divergence} defined as $D^{\bm{\pi}}_{\mathrm{JS}} = \cH \left(\sum_{i=1}^N \pi_i P_i \right) - \sum_{i=1}^N \pi_i \cH(P_i)$, where $P_1, \ldots, P_N$ are $N$ distributions with weights $\pi_1, \ldots, \pi_N$. Commonly used Jensen-Shannon divergence~(JSD) can be seen as a special case when $N = 2$ and $\pi_1=\pi_2 = \frac{1}{2}$.
\cite{jsd_abs_mean} further generalized the arithmetic mean $\sum_{i=1}^N \pi_i P_i$ to other abstract means and proposed closed-form results of geometric mean of exponential family distributions and the divergence among them.

As shown in~\cite{mib}, if a distribution $P(\rX_{1:N})$ is consistent with a Bayesian network $\gG$, the multi-information $\cI(\rX)$ can be expressed as a sum of all local mutual information terms: $\cI(\rX) = \sum_{i=1}^N \cI \left( \rX_i; \parents^{\gG}_{\rX_i} \right)$.
Then the multi-information in $P(\rX)$ with respect to an arbitrary valid Bayesian network $\gG$ can be defined \footnote{Note that $P(\rX)$ is not necessarily consistent with $\gG$ here} as $\cI^{\gG}_{p}(\rX) = \sum_{i=1}^N \cI^{\gG}_p \left( \rX_i; \parents^{\gG}_{\rX_i} \right)$.
The \textbf{M-projection}~\cite{pgm_book,mib} of distribution $P(\rX)$ to the set of distribution that is consistent with a Bayesian network $\gG$ is defined as $\projkl{p}{\gG} = \min_{q \models \gG} \kl{p}{q}$. Then the following results was introduced in~\cite{mib}
\begin{equation}
\projkl{p}{\gG} = \min_{q \models \gG} \kl{p}{q} = \cI_{p}(\rX) - \cI^{\gG}_{p}(\rX)
\end{equation}
where we use subscript to denote the distribution that the mutual information term is evaluated with, and we use superscript to denote the graphical structure that the indicates set of parent nodes used in $\cI^{\gG}_p \left( \rX_i; \parents^{\gG}_{\rX_i} \right)$.

\subsection{Variational autoencoder}
Variational autoencoder~(VAE)~\cite{kingma-vae} is a probabilistic latent variable generative model $\pp(\rvx, \rvz) = \pp(\rvz)\pp(\rvx \mid \rvz)$, where $\pp(\rvz)$ is the prior of latent variables $\rZ$ and $\pp(\rvx \mid \rvz)$ is the likelihood distribution for observed variable $\rX$.
The generative model is often optimized together with a tractable distribution $\qq(\rvz \mid \rvx)$ that approximates the posterior distribution.
The distributions are usually parametrized by neural networks with parameters $\rvtheta$ and $\rvphi$.
The inference model and generation model are jointly optimized by a lower-bound of the KLD between $\qq$ and $\pp$ in the augmented space $(\rX, \rZ)$, namely \emph{ELBO}:
\begin{equation}
\E_{\qq} \log \pp(\rvx | \rvz) - \E_{\qq(\rvx)} \kl{\qq(\rvz \mid \rvx)}{\pp(\rvz)} \equiv \Ls_{\mathrm{ELBO}}\\
\end{equation}
Note $- \Ls_{\mathrm{ELBO}} \ge \kl{\qq(\rvx)\qq(\rvz \mid \rvx)}{\pp(\rvz) \pp(\rvx \mid \rvz)}$ where $\qq(\rvx) = \pdata(\rvx)$ denotes the empirical data distribution.
The above objective can be optimized efficiently with the re-parametrization trick~\cite{kingma-vae,kingma2019an}.
\subsection{Multivariate information bottleneck}
\label{sec:bg_mib}
Multivariate Information Bottleneck~(MIB) theory proposed by~\cite{mib,mib_slonim} extends the information bottleneck theory~\cite{ib_1999} to multivariate setting.
Given a set of observed variable $\rX$, MIB framework introduced a Bayesian network $\gG^{\mathrm{in}}$ to define the solution space of latent variables $\rZ$ as $q(\rX, \rZ) \models \gG^{\mathrm{in}}$.
Another Bayesian network $\gG^{\mathrm{out}}$ is introduced to specify the relevant information to be preserved in $\rZ$.
Then the MIB functional objective is defined as $\Ls_{MIB}^1(q) = \cI_{q}^{\gG^{\mathrm{in}}}(\rX) - \beta \cI^{\gG^{\mathrm{out}}}_{q}(\rX)$.
An alternative structural MIB functional objective is defined as $\Ls_{MIB}^2(q) = \cI_{q}^{\gG^{\mathrm{in}}}(\rX) + \gamma \projkl{q(\rvx, \rvz)}{\gG^{\mathrm{out}}}$, and further relaxed by~\cite{mib_hidden} as $\Ls_{MIB}^2 (q,p) = \cI_{q}^{\gG^{\mathrm{in}}}(\rX) + \gamma \kl{q(\rvx, \rvz)}{p(\rvx, \rvz)}$. We refer to~\cite{mib,mib_hidden} for more details of MIB theory.

\label{sec:mib}

\section{Framework}
\label{sec:framework}
\subsection{Preliminaries}
Given a dataset $\cD = \left\{\rvx^{d}\right\}_{d=1}^{|\cD|}$, we assume that observations are generated from some random process governed by a set of latent factors, which could be categorized into two types: private latent factors $\rU \equiv \rU_{1: N} \equiv \left[\rU_{1}, \rU_{2}, \ldots, \rU_{N}\right]$ and common latent factors $\rZ \equiv \rZ_{1: M} \equiv\left[\rZ_{1}, \rZ_{2}, \ldots, \rZ_{M}\right]$. We use $\rU_i$ to denote the latent factors that are exclusive to the variable $\rvx_i$ and assume a jointly independent prior distribution  $P(\rU)$. We use $\rZ$ to denote the latent factors that are possibly shared by some subset of observed variables and assume a prior distribution $P(\rZ)$. The dimension of each $\rU_i$ and $\rZ_j$ is arbitrary.


\subsection{Generative model with explicit dependency structure representation}
\label{sec:generative_model}
\textbf{Generation model}
We explicitly model the dependency structure from $\rZ$ to $\rX$ in the random generation process with a binary matrix variable $\rrM^p \equiv \left[\rrM^p_{ij}\right] \in \left\{0, 1\right\}_{N \times M}$.
$\rrM^p_{ij} = 1$ when the latent factor $\rZ_j$ contributes to the random generation process of $\rX_i$, or otherwise $\rrM^p_{ij} = 0$.
Let $\rrM^p_i = \left[\rrM^p_{i1}, \rrM^p_{i2}, \ldots, \rrM^p_{iM}\right]$ denote the $i$-th row of $\rrM$. 
we can define our generative model $p_\rvtheta(\rvx, \rvz, \rvu)$ as
\begin{equation}
\label{eq:p_gen}
    p_\rvtheta(\rvx, \rvz, \rvu) = \pp(\rvz) \prod_{i=1}^N \pp(\rvu_i) \prod_{i=1}^N \pp(\rvx_i \mid \rvz^{\rvm^p_i}, \rvu_i)
\end{equation}

where $\rvtheta$ is the parameter for parameterizing the generation model distribution.
The structure of the generation model is illustrated by Bayesian network $\gG^p_{\mathrm{full}}$ in Figure~\ref{fig:bn_mvae}, where the structural variable $\rrM^p$ is depicted as the dashed arrows.

\textbf{Inference}
We introduce an inference model to approximate the true posterior distributions.
We introduce another binary matrix variable $\rrM^q \equiv \left[\rrM^q_{ij}\right] \in \left\{0, 1\right\}_{N \times M}$.
$\rrM^q_{ij} = 1$ when the observed variable $\rX_i$ contributes to the inference process of $\rZ_j$, or otherwise $\rrM^q_{ij} = 0$.
Let $\rrM^q_j = \left[\rrM^q_{1j}, \rrM^q_{2j}, \ldots, \rrM^q_{Nj}\right]$ denote the $j$-th column of $\rrM^q$.
We assume that latent variables are conditional jointly independent given observed variables.
Then we could define our inference model $q_\rvphi(\rvx, \rvz, \rvu)$ as:
\begin{equation}
\label{eq:q_inf}
q_\rvphi(\rvx, \rvz, \rvu) = \qq(\rvx) \prod_{i=1}^N \qq(\rvu_i \mid \rvx_i) \prod_{j=1}^M \qq(\rvz_j \mid \rvx^{\rvm^q_j}) 
\end{equation}
where $\rvphi$ is the parameter for parameterizing the inference distribution.
The structure of the inference model is illustrated by Bayesian network $\gG^q_{\mathrm{full}}$ in Figure~\ref{fig:bn_mvae}, where the structural variable $\rrM^q$ is depicted as the dashed arrows.

\subsection{Learning from information-theoretic perspective}
\label{sec:framework_mib}
We motivate our learning objective based on the MIB~\cite{mib} theory.
We can define a Bayesian network $\gG^q \equiv \left( \gV^q, \gE^q \right)$ that is consistent with the inference model distribution $q_\rvphi(\rvx, \rvz, \rvu) \models \gG^q$ according to $\rrM^q$.
A directed edge from $\rX_i$ to $\rU_i$ is added for each $i \in \left\{1, 2, \ldots, N\right\}$ and an edge from $\rX_i$ to $\rZ_j$ is added if and only if $\rrm^q_{ij} = 1$.
Note that we could omit all edges between observed variables in $\gG^q$ as shown in~\cite{mib, mib_hidden}.
A Bayesian network $\gG^p \equiv \left( \gV^p, \gE^p \right)$ can be constructed according to $\rrM^p$ in a similar way.
As introduced in Eq. \ref{sec:bg_mib}, we have the following structural variational objective from the MIB theory:
\begin{equation}
\label{eq:mib_loss}
\min_{p_\rvtheta \models \gG^p, q_\rvphi \models \gG^q} \Ls(\rvtheta, \rvphi) = \cI^{\gG^q}_q + \gamma \kl{q_\rvphi}{p_\rvtheta}
\end{equation}
The above objective provides a principled way to trade-off between (i)the compactness of learned latent representation measured by $\cI^{\gG^q}_q$, and (ii)the consistency between $\qq(\rvx, \rvz, \rvu)$ and $\pp(\rvx, \rvz, \rvu)$ measured by the KLD, through $\gamma > 0$. 

We further generalize this objective to enable encoding a broader class of prior knowledge or desired structures into the latent space.
We prescribe the dependency structure and conditional independence rules that the learned joint distribution of $\left( \rvx, \rvz, \rvu \right)$ should follow, in the form of a set of Bayesian networks $\left\{ \gG^k \equiv \left(\gV^k, \gE^k \right) \right\}, k = 1,\ldots, K$.
We optimize over the inference distributions $q_\rvphi$ to make it as consistent with $\gG^k$ as possible, measured by its M-projection to $\gG^k$.
Formally we have the following constrained optimization objective:
\begin{equation}
\begin{aligned}
&\min_{p_\rvtheta \models \gG^p, q_\rvphi \models \gG^q} \Ls(\rvtheta, \rvphi) = \kl{\qq}{\pp}\\
&\mathrm{s.t.}\quad\projkl{q_\rvphi}{\gG^k} = 0 \quad k = 1, 2, \ldots, K
\end{aligned}
\end{equation}
In this way, we impose the preferences over the structure of learned distributions as explicit constraints. We relax the above constrained optimization objective with generalized Lagrangian
\begin{equation}
\max_{\vbeta \ge \vzero} \min_{p_\rvtheta \models \gG^p, q_\rvphi \models \gG^q} \Ls = \kl{q_\rvphi}{p_\rvtheta} + \sum_{k=1}^K \beta_k \projkl{q_\rvphi}{\gG^k}
\end{equation}
where $\vbeta \equiv \left[ \beta_1, \beta_2, \ldots, \beta_K \right]$ is the vector of Lagrangian multipliers.
In this work we fix $\vbeta$ as constant hyper-parameters, governing the trade-off between structural regularization and distribution consistency matching. Following the idea proposed in~\cite{lagvae}, we could also generalize the distribution matching loss by using a vector of $T$ \emph{cost functions} $\vC \equiv \left[\sC_1, \sC_2, \ldots, \sC_T \right]$ and a vector of Lagrangian multipliers $\valpha \equiv \left[\alpha_1, \alpha_2, \ldots, \alpha_T \right]$.
Each $\sC_i$ can be any probability distribution divergence between $\qq$ and $\pp$, or any measurable cost function defined over corresponding samples.
Thus we could decompose the overall objective as
\begin{equation}
\label{eq:loss}
\begin{split}
&\Ls = \ldist + \lreg \\
&\ldist = \sum_{t=1}^T \alpha_t \sC_t(q_\rvphi \;\|\; p_\rvtheta), \quad \valpha \ge \vzero\\
&\lreg = \sum_{k=1}^K \beta_k \projkl{q_\rvphi}{\gG^k}, \quad \vbeta \ge \vzero \, . \\
\end{split}
\end{equation}
By setting $\sC_1 = \kl{\qq}{\pp}$ and $\gG^1 = \gG^{\emptyset}$, we can obtain that the original MIB structural variational objective in Eq.~\ref{eq:mib_loss} as a special case. We include the detailed proof in Appendix.~\ref{ap:framework}.

\subsection{Tractable inference and generation}
Though we have our generation and inference model defined in Sec.~\ref{sec:generative_model}, it's not clear yet how we practically parametrize $\qq$ and $\pp$ in a tractable and flexible way, to handle super-exponential number of possible structures $\rrM^p, \rrM^q$ and efficient inference and optimization.

\textbf{Inference model}~
\label{sec:framework_inference}
We identify the key desiderata of our inference model defined in~Eq.~\ref{eq:q_inf} as
(i)being compatible with any valid structure variable $\rrM^q$ and
(ii)being able to handle missing observed variables in $q(\rvz_j \mid \rvx^{\rvm^q_j})$, in an unified and principled way.
Building upon the assumption of our generation model distribution $\pp$ in~Eq.~\ref{eq:p_gen} that all observed variables $\rX$ are conditional jointly independent given $\rZ$, we have following factorized formulation in the true posterior distribution $\pp(\rvz \mid \rvx)$ by applying Bayes' rule:
\begin{equation}
\begin{aligned}
&p_\rvtheta\left(\rvz \mid \rvx^{\sS} \right)  =\frac{\pp(\rvx^{\sS} \mid \rvz) \pp(\rvz)}{\pp(\rvx^{\sS})} = \frac{\pp(\rvz)}{\pp(\rvx^{\sS})} \prod_{i \in \sS} \pp\left(\rvx_{i} \mid \rvz \right) \\
&= \frac{\pp(\rvz)}{\pp(\rvx^{\sS})} \prod_{i \in \sS} \frac{\pp(\rvz \mid \rvx_i) \pp(\rvx_i)}{\pp(\rvz)} \propto \pp(\rvz) \prod_{i \in \sS} \frac{\pp(\rvz \mid \rvx_i)}{\pp(\rvz)}
\end{aligned}
\end{equation}
where $\sS \subseteq \left\{ 1, 2, \ldots, N \right\}$.
In this way, we established the relationship between the joint posterior distribution $\pp(\rvz \mid \rvx)$ and the individual posterior distribution $\pp(\rvz \mid \rvx_i)$.
We adopt the same formulation in our inference model distribution as $\qq(\rvz \mid \rvx^{\sS}) \propto \pp(\rvz) \prod_{i \in \sS} \frac{\qq(\rvz \mid \rvx_i)}{\pp(\rvz)}$, using $N$ individual approximate posterior distributions $\qq(\rvz \mid \rvx_i)$.
In this work, we assume that $\pp(\rvz)$ and $\qq(\rvz \mid \rvx_i)$ are all following factorized Gaussian distributions.
And each individual posterior $\qq(\rvz \mid \rvx_i)$ can be represented as:
\begin{equation}
\qq(\rvz \mid \rvx_i) = \prod_{j=1}^M \qq(\rvz_j \mid \rvx_i)^{\rvm^q_{ij}}\pp(\rvz_j)^{1 - \rvm^q_{ij}}
\end{equation}
where each $\qq(\rvz_j \mid \rvx_i)$ is a multiplicative mixture between the approximated posterior $\qq(\rvz_j \mid \rvx_i)$ and the prior $\pp(\rvz_j)$, weighted by $\rvm^q_{ij}$.
Since the quotient of two Gaussian distributions is also a Gaussian under well-defined conditions, we could parametrize the quotient $\frac{\qq(\rvz \mid \rvx_i)}{\pp(\rvz)}$ using a Gaussian distribution parametrized by $\qqt(\rvz \mid \rvx_i)$. In this case
\begin{equation}
\begin{split}
&\frac{\qq(\rvz \mid \rvx_i)}{\pp(\rvz)} = \prod_{j=1}^M \frac{\qq(\rvz_j \mid \rvx_i)^{\rvm^q_{ij}}\pp(\rvz_j)^{1 - \rvm^q_{ij}}}{\pp(\rvz_j)^{\rvm^q_{ij} + 1 - \rvm^q_{ij}}}\\
&= \prod_{j=1}^M \left(\frac{\qq(\rvz_j \mid \rvx_i)}{\pp(\rvz_j)}\right)^{\rvm^q_{ij}} = \prod_{j=1}^M \left(\qqt(\rvz_j \mid \rvx_i) \right)^{\rvm^q_{ij}}
\end{split}
\end{equation}
where we use a inference network $\qqt(\rvz_j \mid \rvx_i)$ to parametrize $\qq(\rvz_j \mid \rvx_i)$ as $\qq(\rvz_j \mid \rvx_i) = \qqt(\rvz_j \mid \rvx_i)\pp(\rvz_j)$.
We show our full inference distribution $\qq(\rvz \mid \rvx)$ as:
\begin{equation}
\begin{aligned}
\qq(\rvz \mid \rvx^{\sS})
\propto \prod_{j=1}^M \left(\pp(\rvz_j)\prod_{i\in \sS} \left(\qqt(\rvz_j \mid \rvx_i) \right)^{\rvm^q_{ij}} \right)
\end{aligned}
\label{eq:inference_poe}
\end{equation}
which is a weighted product-of-experts~\cite{hinton_poe} distribution for each latent variable $\rZ_j$.
We include the detailed derivation in Appendix.~\ref{ap:framework}.
The structure variable $\rrM^q_{ij}$ controls the weight of each multiplicative component $\qqt(\rvz_j \mid \rvx_i)$ in the process of shaping the joint posterior distribution $\qq(\rvz \mid \rvx)$.
As a result of the Gaussian assumptions, the weighted product-of-experts distribution above has a closed-form solution.
Suppose $\pp(\rvz) \sim \cN\left(\vmu_0, \mathrm{diag}\left(\vsigma_0\right)\right)$, $\qqt(\rvz \mid \rvx_i) \sim \cN\left(\vmu_i, \mathrm{diag}\left(\vsigma_i\right)\right)$ for $i = 1, 2, \ldots, N$.
We introduce "dummy" variables in $\rvm^q$ that $\rvm^q_{0j} = 1$ for all $j$.
Then we have
\begin{equation}
\begin{split}
&\qq(\rvz \mid \rvx^\sS) \sim \cN \left( \vmu^q, \mathrm{diag}\left(\vsigma^q\right) \right) \\
& \frac{1}{\vsigma_j^q} = \sum_{i \in \sS \cup \left\{0\right\} } \frac{\rvm^q_{ij}}{\vsigma_{ij}} \quad \vmu^q_j = \frac{1}{\vsigma_j^q} \sum_{i \in \sS \cup \left\{0\right\} } \frac{\rvm^q_{ij}}{\vsigma_{ij}}\vmu_{ij} \, . \\ 
\end{split}
\end{equation}
With the derived inference model above, we are now able to model $2^N$ posterior inference distributions $\qq(\rvz \mid \rvx^\sS) \; \forall \sS$, coupled with $2^{N \times M}$ possible discrete structures $\rrM^q$, with $N$ inference networks $\qqt(\rvz \mid \rvx_i)$.
Note that the introduced distribution $\qq(\rvz \mid \rvx)$ remains valid when we extend the value of structure variable $\rrM^q$ to continuous domain $\mathbb{R}^{N \times M}$, which paves the way to gradient-based structure learning.

\textbf{Generation model}
We could parametrize our generation model $\pp$ in a symmetric way using the weighted product-of-expert distributions using $\pp(\rvx_i \mid \rvz_j)$ and $\rrM^p$.
In this work we adopt an alternative approach, due to the consideration that the Gaussian distribution assumption is inappropriate in complex raw data domain, like image pixels.
We instead use $\rrM^p$ as a gating variable and parametrize $\pp(\rvx_i \mid \rvz^{\rvm^p_i})$ in the form of $\pp(\rvx_i \mid \rvz^{\rvm^p_i}) = \pp(\rvx_i \mid \rvz \odot \rvm^p_i)$, where $\odot$ denotes element-wise multiplication. We can see that it's still tractable since the prior $\pp(\rvz)$ is known.

\subsection{Tractable optimization}
\textbf{Structural regularization $\lreg$}
Let's take a close look at the structural regularization term $\lreg$ in our training objective Eq.~\ref{eq:loss}.
As introduced in Sec.\ref{sec:bg_mib}, we have $\projkl{\qq}{\gG^k} = \sum_{\rv \in \left\{\rvx,\rvz \right\} } \miq{\rv}{\parents^{\gG^q}_{\rv}} - \sum_{\rv \in \left\{\rvx,\rvz \right\}} \miq{\rv}{\parents^{\gG^k}_{\rv}}$. 
This objective poses new challenge to estimate and optimize mutual information.
Note that any differentiable mutual information estimations and optimization methods can be applied here.
In this paper, we propose to use tractable variational lower/upper-bounds of the intractable mutual information by re-using distributions $\qq$ and $\pp$.
We refer to~\cite{mi_bounds} for a detailed review and discussion of state-of-the-art tractable mutual information optimization methods.

\begin{algorithm}[tb]
   \caption{Training with optional structure learning}
   \label{alg:learning}
\begin{algorithmic}
   \REQUIRE dataset $\cD = \left\{\rvx^{d}\right\}_{d=1}^{|\cD|}$
   \REQUIRE parameters $\rvphi, \rvtheta, \bm{\rho^q}, \bm{\rho^p}$
   \REQUIRE Bayesian Networks $\left\{ \gG^k \equiv \left(\gV^k, \gE^k \right) \right\}$
   \REQUIRE hyper-parameters $\valpha$, $\vbeta$
   \REQUIRE number of iterations to update distribution parameters  $steps\_{dist} > 0$
   \REQUIRE number of iterations to update structure parameters  $steps\_{str} \ge 0$
   \REQUIRE mini-batch size $bs$
   \REQUIRE gradient-based optimizer $opt$
   \STATE initialize all parameters $\rvphi, \rvtheta, \bm{\rho^q}, \bm{\rho^p}$
   \REPEAT
       \FOR{$step=1$ {\bfseries to} $steps\_{dist}$}
       \STATE randomly sample a mini-batch $\cB$ of size $bs$ from dataset $\cD$
       \STATE evaluate loss $\ldist^\cB$ using Eq.~\ref{eq:loss}
       \STATE compute gradients $\nabla_{\rvphi} \ldist^\cB$, $\nabla_{\rvtheta}\ldist^\cB$ 
       \STATE $opt.optimize(\left[\rvphi, \rvtheta \right], \left[ \nabla_{\rvphi} \ldist^\cB, \nabla_{\rvtheta}\ldist^\cB\right])$
       \ENDFOR
       
       \FOR{$step=1$ {\bfseries to} $steps\_{str}$}
       \STATE randomly sample a mini-batch $\cB$ of size $bs$ from dataset $\cD$
       \STATE evaluate loss $\Ls_{\mathrm{score}}^\cB$ using Eq.~\ref{eq:loss_score}
       \STATE compute gradients $\nabla_{\bm{\rho^q}} \Ls_{\mathrm{score}}^\cB$, $\nabla_{\bm{\rho^p}} \Ls_{\mathrm{score}}^\cB$ 
       \STATE $opt.optimize(\left[\bm{\rho^q}, \bm{\rho^p} \right], \left[\nabla_{\bm{\rho^q}} \Ls_{\mathrm{score}}^\cB, \nabla_{\bm{\rho^p}} \Ls_{\mathrm{score}}^\cB \right])$
       \ENDFOR
   \UNTIL{converged}
\end{algorithmic}
\end{algorithm}

\label{sec:framework_ldist}
\textbf{Distribution consistency $\ldist$} We aim to achieve the consistency between the joint distribution $\qq(\rvx, \rvz, \rvu)$ and $\pp(\rvx, \rvz, \rvu)$ through $T$ cost functions in $\ldist$. 
With the proposed inference model in Sec.~\ref{sec:framework_inference}, we could decompose our $\ldist$ into two primary components:
(i)~\emph{Enforcing $\qq(\rvx, \rvz, \rvu) = \pp(\rvx, \rvz, \rvu)$}~Many previous works\cite{kingma-vae,wae,ali,bigan} have been proposed to learn a latent variable generative model to model the joint distribution, any tractable objective can be utilized here, we adopt the \emph{ELBO} as the default choice.
(ii)~\emph{Enforcing $\qq(\rvz) = \pp(\rvz)$}~ The reason that we explicitly include this objective in $\ldist$ is due to our $\pp$-dependent parametrization of $\qq(\rvz \mid \rvx) \propto \pp(\rvz) \prod_{i=1}^{N} \frac{\qq(\rvz \mid \rvx_i)}{\pp(\rvz)}$. Thus we explicitly enforce the consistency between the  induced marginal distribution $\qq(\rvz) \equiv \E_{\qq} \qq(\rvz \mid \rvx)$ and $\pp(\rvz)$. Tractable divergence estimators for minimizing $ C_T\left(q_\rvphi(\rvz) \;\|\; p_\rvtheta(\rvz) \right)$ have been proposed and analyzed in previous works,
\begin{equation}
\begin{aligned}
\ldist = \sum_{t=1}^{T-1} \alpha_t C_t(q_\rvphi \;\|\; p_\rvtheta) + \alpha_T C_T\left(q_\rvphi(\rvz) \;\|\; p_\rvtheta(\rvz)\right) \, .
\end{aligned}
\end{equation}

With the distribution consistency objective and the compositional inference model introduced in Sec.~\ref{sec:framework_inference}, we could train the latent variable generative model in a weakly/semi-supervised manner in terms of (i)~incomplete data where $\rX$ is partially observed (e.g. missing attributes in feature vectors, or missing a modality in multi-modal dataset), and (ii)~partial known dependency structure in $\rrM^q$ and $\rrM^p$.

\textbf{Structure learning}~
In this work, we show that our proposed framework is capable of learning the structure of Bayesian network $\gG^q$ and $\gG^p$ based on many existing structure learning methods efficiently, with \emph{gradient-based} optimization techniques, which avoids searching over the discrete super-exponential space.
Specifically, we show that our proposed framework can
(i) represent the assumptions made about the structure of the true data distribution in the form of a set of structural regularization in the form of Bayesian networks $\{\gG^k\}$ as the \emph{explicit inductive bias}.
A score-based structure learning objective is then introduced where $\lreg$ plays a vital role in scoring each candidate structure;
and (ii) utilize the non-stationary data from multiple environments~\cite{nonlinear_ica_tcl,irm,mila_metacausal} as additional observed random variables.
We show the score-based structure learning objective as below
\begin{equation}
\label{eq:loss_score}
\min_{\rvm^q, \rvm^p} \Ls_{\mathrm{score}} = \ldist + \lreg + \Ls_{\mathrm{sparsity}} \, . 
\end{equation}
We assume a jointly factorized Bernoulli distribution prior for structure variable $\rrM^q$ and $\rrM^p$, parametrized by $\bm{\rho^q}$ and $\bm{\rho^p}$.
We use the gumbel-softmax trick proposed by \cite{gumbel-softmax,concrete_dist,gumbel_family} as gradient estimators.
Following the Bayesian Structural EM~\cite{structural_em,mib_hidden} algorithm, we optimize the model alternatively between optimizing distributions $\Ls(\qq, \pp)$ and structure variables $\Ls_{\mathrm{score}}(\rvm^q, \rvm^p)$.
We present the full algorithm to train the proposed generative model with optional structure learning procedure in Alg.~\ref{alg:learning}.

\begin{table}[t]
\caption{Distribution consistency objectives $\ldist$}
\label{tb:pool_ldist}
\begin{center}
\begin{small}
\begin{sc}
\begin{tabular}{cc}
\toprule
$C$ &  $definition$ \\
\midrule
$C_0(\rvx, \rvz, \rvu)$    &  $\kl{\qq}{\pp}$\\
$C_1(\rvx, \rvu)$    &  $-\Ls_{\mathrm{ELBO}}(\qq(\rvx, \rvu), \pp(\rvx, \rvu))$\\
$C_2(\rvx)$    &  $\js{\qq(\rvx)}{\pp(\rvx)}$\\
$C_3(\rvz)$    &  $\kl{\qq(\rvz)}{\pp(\rvz)}$\\
$C_4(\rvx_i, \rvz)$    &  $\kl{\qq(\rvx_i, \rvz)}{\pp(\rvx_i, \rvz)}$\\
\bottomrule
\end{tabular}
\end{sc}
\end{small}
\end{center}
\vskip -0.1in
\end{table}

\begin{table*}[t]
\caption{A unified view of \{single/multi\}-\{modal/domain/view\} models. $C_i$ is referred to as the definition in Table.~\ref{tb:pool_ldist}, $\gG$ is referred to as the Bayesian networks in Figure.~\ref{fig:bn_vae},~\ref{fig:bn_mvae}. We use $N$ to denote the number of views/domains/modals. We use \textcircled{1} to denote \textit{shared/private latent space decomposition}, and use \textcircled{2} to denote \textit{dependency structure learning}.  Please see Appendix.~\ref{ap:framework} for the full table.}
\label{tb:unified_models}
\begin{center}
\begin{small}
\begin{sc}
\begin{tabular}{lccccccl}
\toprule
Models & $N$ & \textcircled{1} & \textcircled{2} & $\gG^q$ & $\gG^p$ &  $\ldist$ & $\lreg$ \\
\midrule
VAE    & $1$ & $\times$ & $\times$ & $\left[\gG^q_{\mathrm{single}}\right]$ & $\left[\gG^p_{\mathrm{single}}\right]$ & $[1, C_1]$ & $[]$\\
\midrule
GAN    & $1$ & $\times$ & $\times$ & [] & $\gG^p_{\mathrm{single}}$ & $[1, C_2]$ & $[]$\\
\midrule
InfoGAN & $1$ & $\times$ & $\times$ & [] & $\gG^p_{\mathrm{single}}$ & $[1, C_2]$ & $[1, \gG^{\mathrm{InfoGAN}}]$\\
\midrule
$\beta$-VAE & $1$ & $\times$ & $\times$ & $\left[\gG^q_{\mathrm{single}}\right]$ & $\left[\gG^p_{\mathrm{single}}\right]$ & $[1, C_1], [\beta-1, C_3]$ & $[\beta - 1, \gG^\emptyset]$\\
\midrule
$\beta$-TCVAE & $1$ & $\times$ & $\times$ & $\left[\gG^q_{\mathrm{single}}\right]$ & $\left[\gG^p_{\mathrm{single}}\right]$ & $[1, C_1], [\alpha_2, C_2]$ & $[\beta, \gG^p]$\\
\midrule
JMVAE  & $2$ & $\times$ & $\times$ & $\left[\gG^q_{\mathrm{joint}}\right]$ & $\left[\gG^p_{\mathrm{joint}}\right]$ & $[1, C_1]$& $[\beta_i, \gG^{\mathrm{str}}_{\mathrm{cross}}(\rvx_i)]$\\
\midrule
MVAE   & $N$ & $\times$ & $\times$ & $\left[\gG^q_{\mathrm{joint}},\gG^q_{\mathrm{marginal}}\right]$ & $\left[\gG^p_{\mathrm{joint}}\right]$ & $[1, C_1]$ & $[\beta_i, \gG^{\mathrm{str}}_{\mathrm{marginal}}(\rvx_i)]$\\
\midrule
Wyner  & $2$ & $\checkmark$ & $\times$ & $\left[\gG^q_{\mathrm{joint}},\gG^q_{\mathrm{marginal}}\right]$ & $\left[\gG^p_{\mathrm{joint}}\right]$ & $[1, C_1]$ & $[\beta_i, \gG^{\mathrm{str}}_{\mathrm{cross}}(\rvx_i)],[\beta_i, \gG^{\mathrm{str}}_{\mathrm{private}}(\rvx_i)]$\\
\midrule
OURS-MM & $N$ & $\checkmark$ & $\checkmark$ & $\left[\gG^q_{\mathrm{full}} \right]$ & $\left[\gG^p_{\mathrm{full}} \right]$ & $[1, C_0]$ & $[\beta_i, \gG^{\mathrm{str}}_{\mathrm{cross}}(\{\rvx_i\})]$\\
\bottomrule
\end{tabular}
\end{sc}
\end{small}
\end{center}
\vskip -0.1in
\end{table*}

\section{Case study: Generative Data Modeling}
In this section, we show various types of generative data modeling can be viewed as a structured latent space learning problem, which can be addressed by our proposed framework in a principled way.
\begin{figure}[ht]
\vskip 0.2in
\begin{center}
\begin{tabular}{ccc}
\includegraphics[width=0.3\columnwidth]{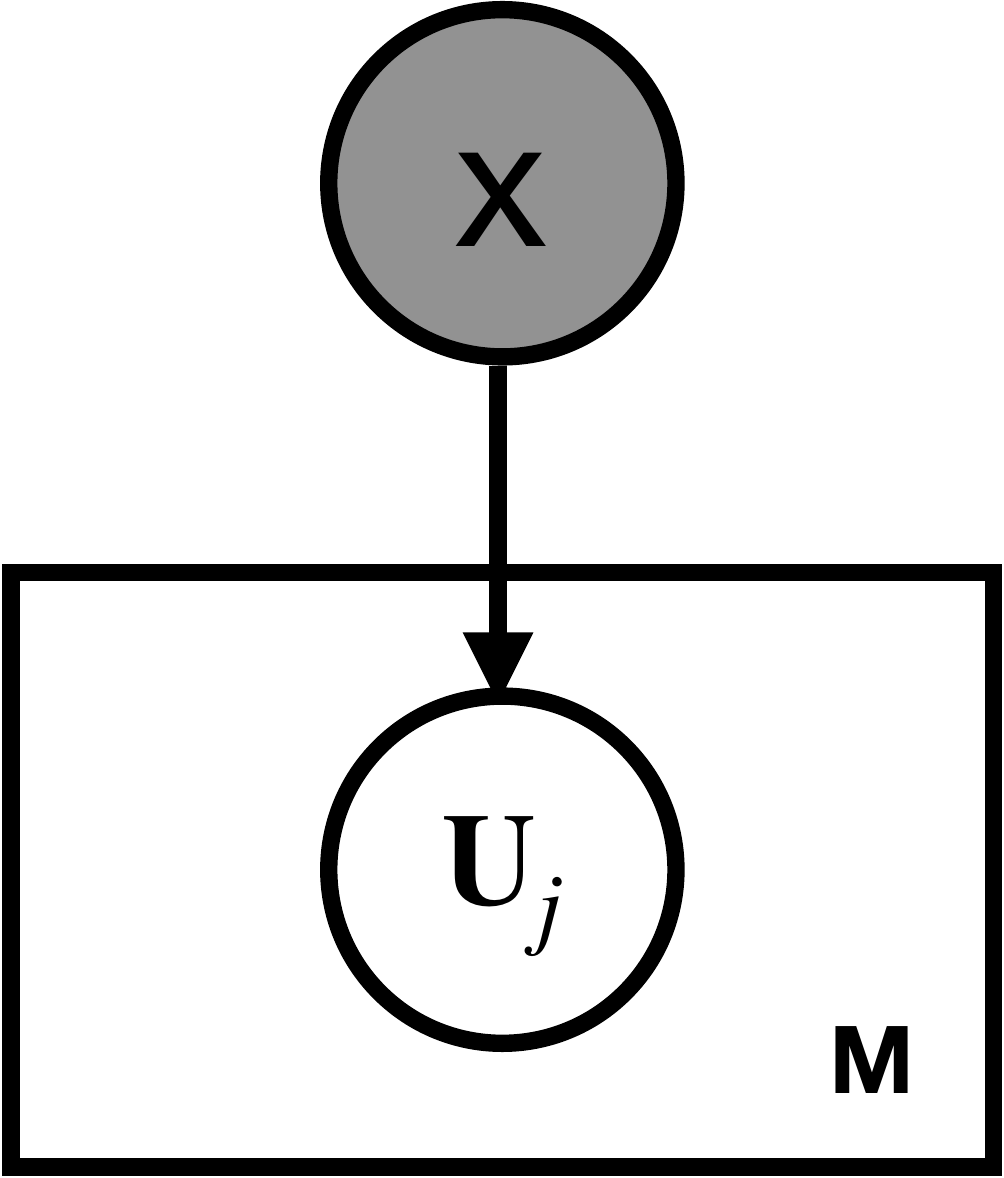} & 
\includegraphics[width=0.3\columnwidth]{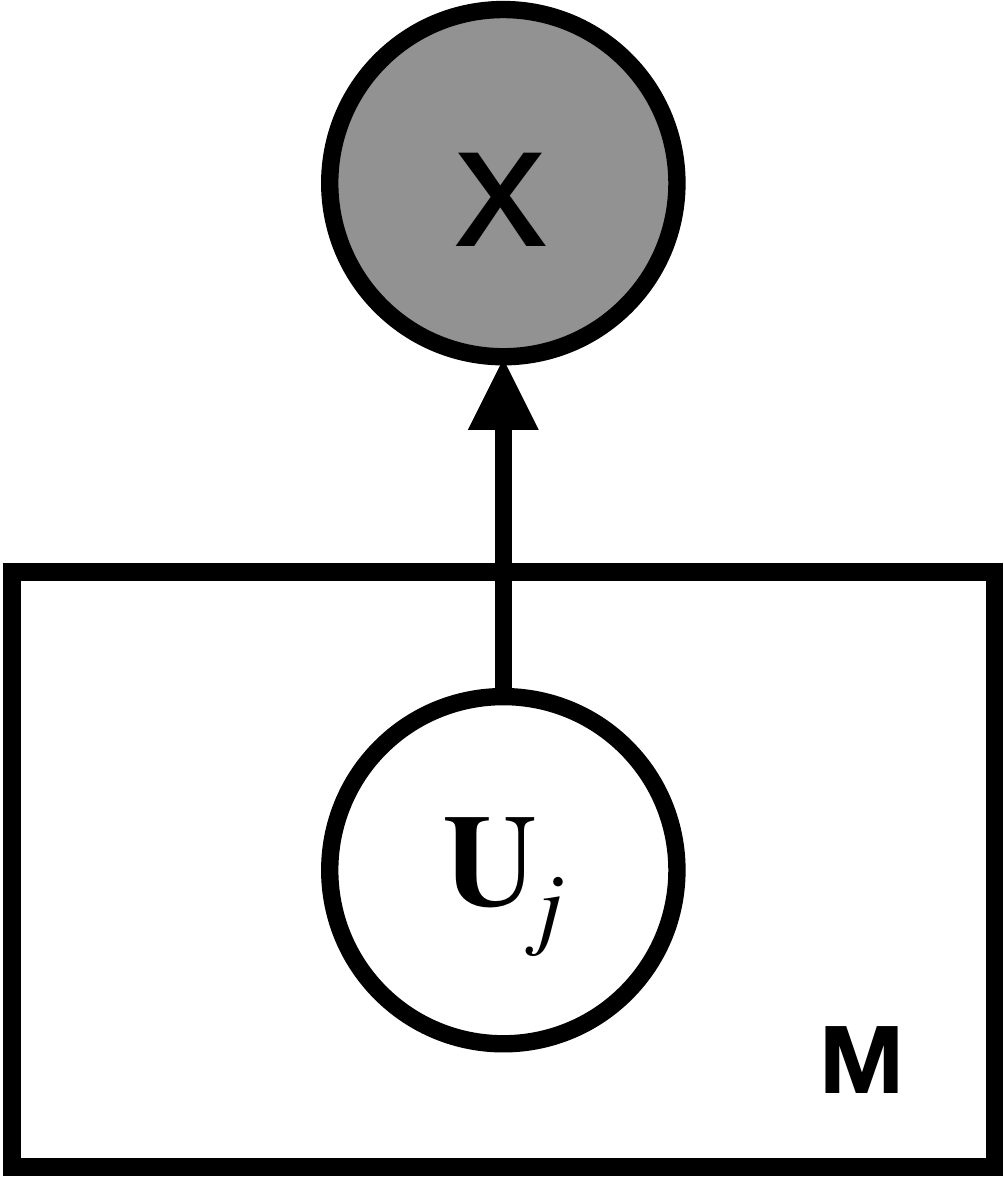} &
\includegraphics[width=0.3\columnwidth]{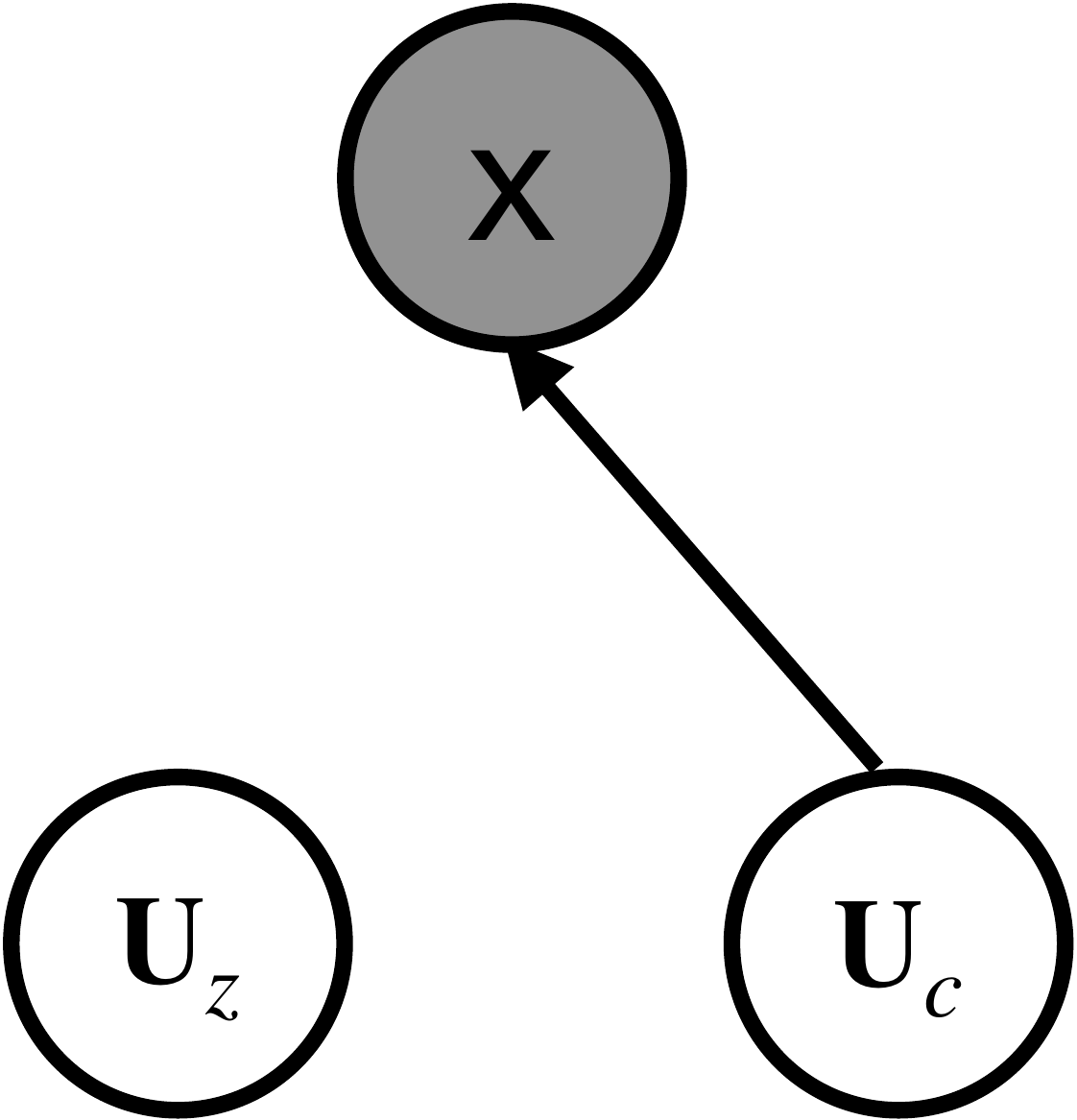}\\
$\gG^q_{\mathrm{single}}$ & $\gG^p_{\mathrm{single}}$ & $\gG^{\mathrm{str}}_{\mathrm{info}}$
\end{tabular}
\caption{Bayesian networks for single-modal models}
\label{fig:bn_vae}
\end{center}
\vskip -0.2in
\end{figure}

\begin{figure}[ht]
\vskip 0.2in
\begin{center}
\begin{tabular}{ccl}
\includegraphics[width=0.45\columnwidth]{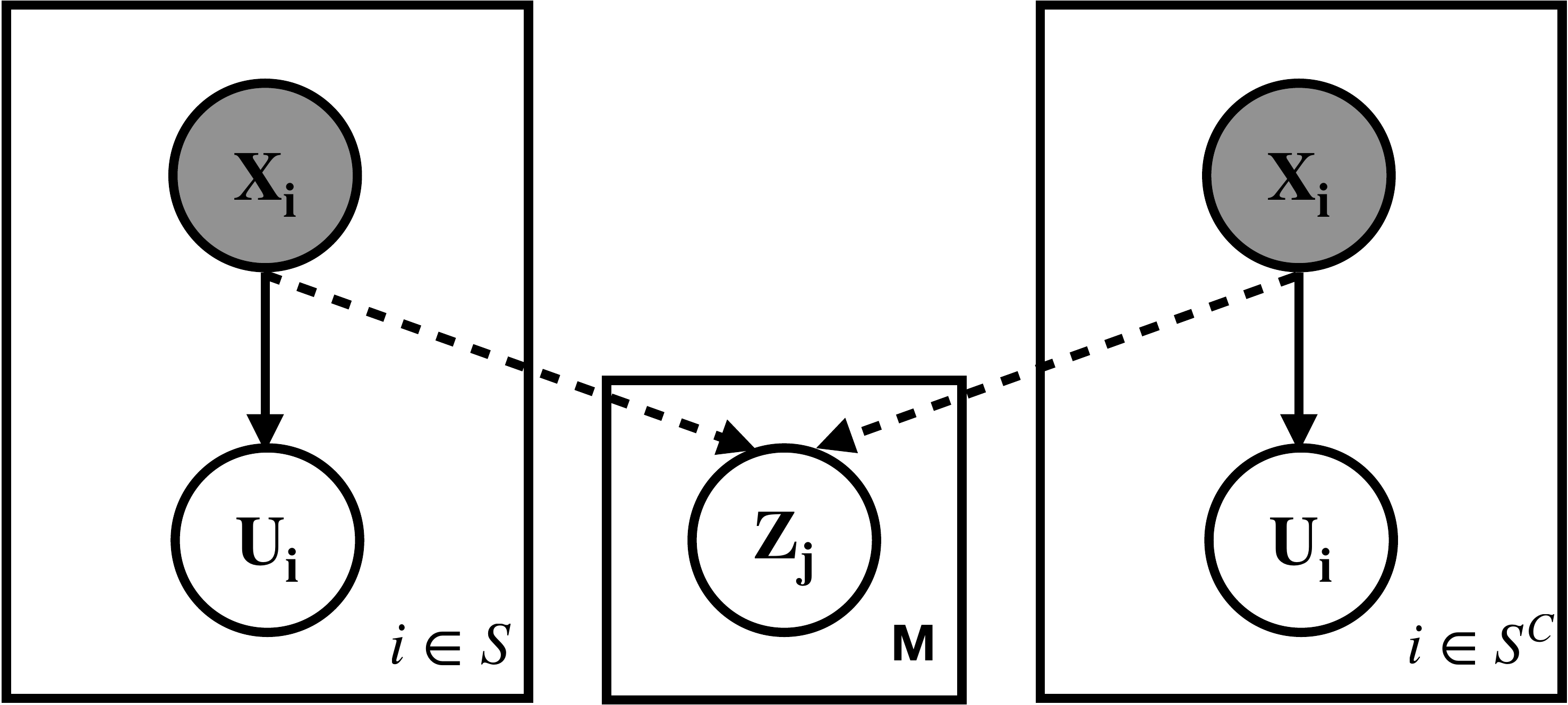} & 
\includegraphics[width=0.45\columnwidth]{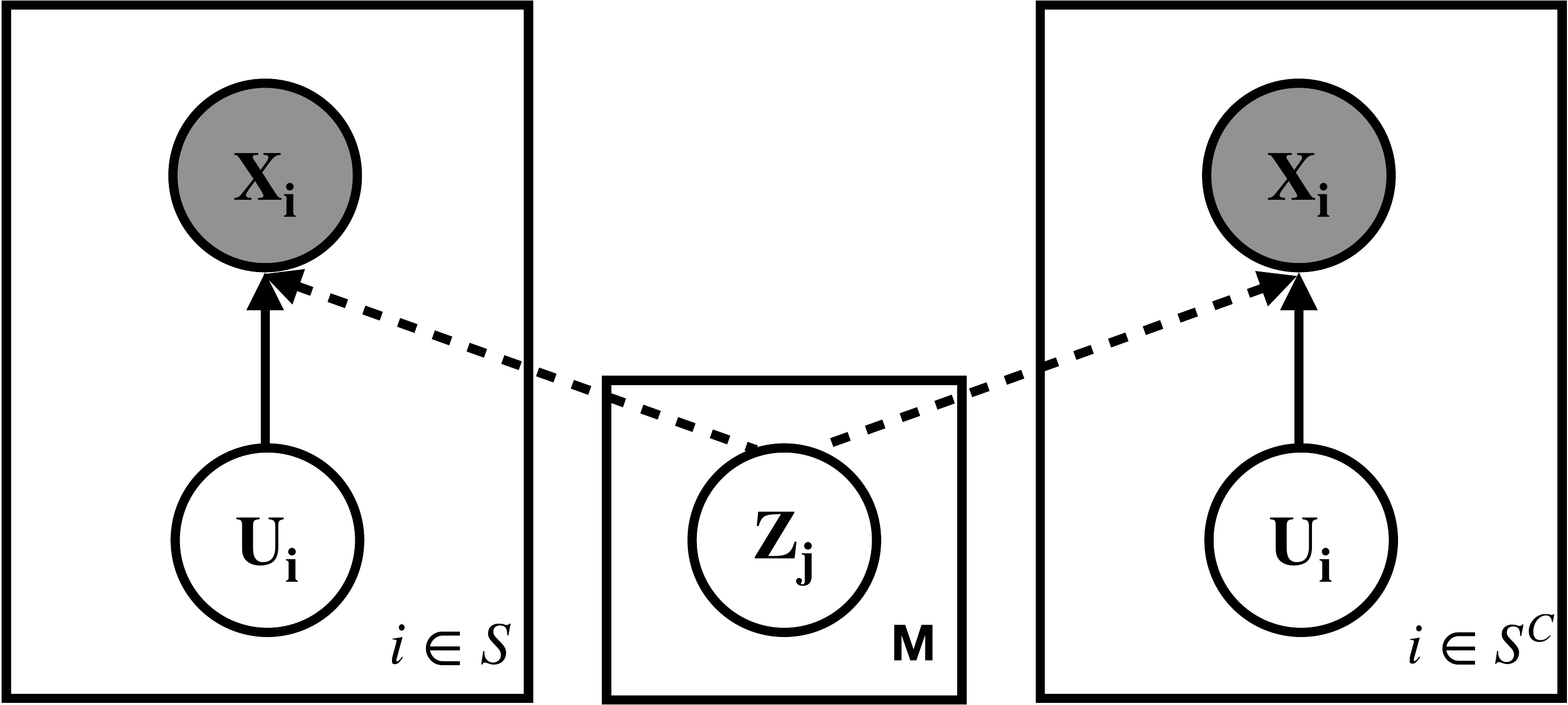} \\
$\gG^q_{\mathrm{full}}$ & $\gG^p_{\mathrm{full}}$\\
\includegraphics[width=0.45\columnwidth]{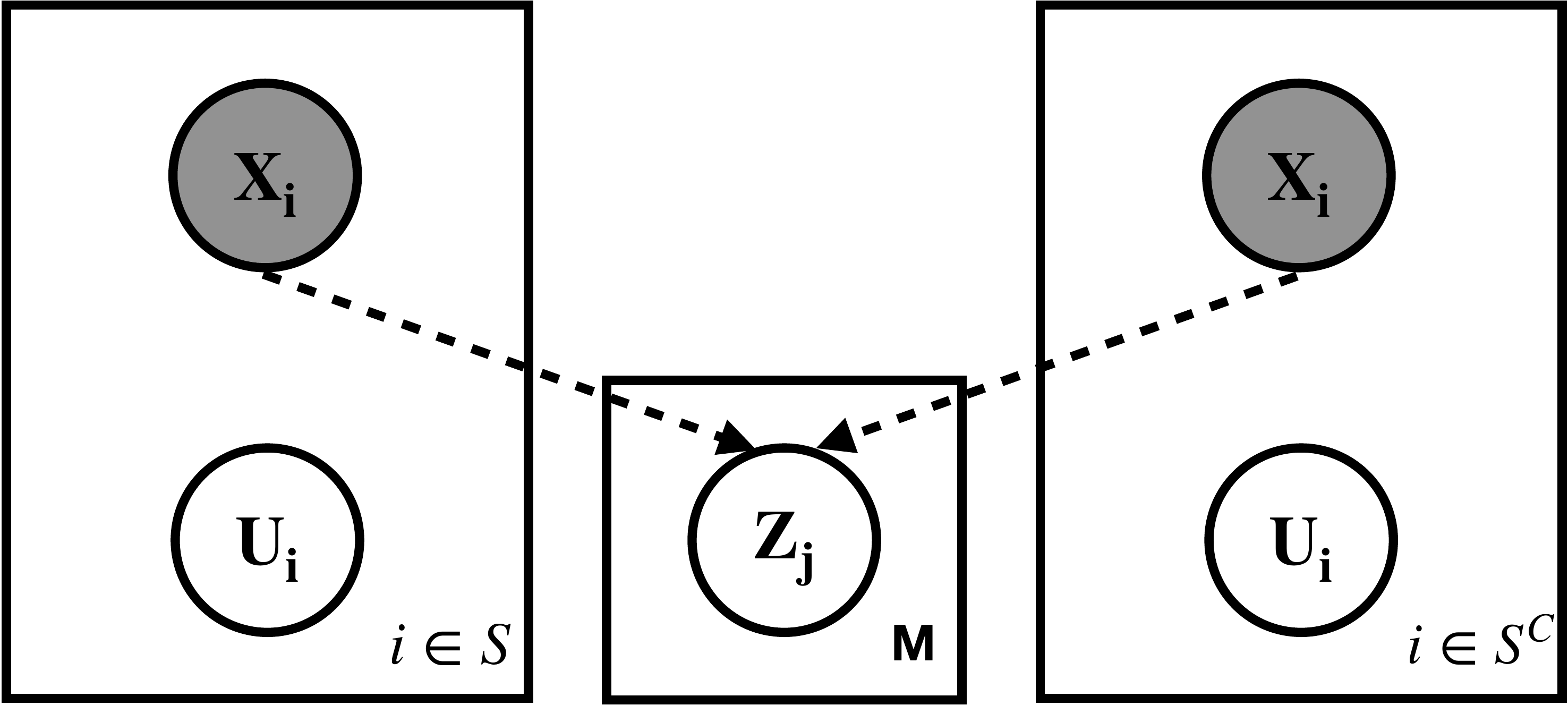} & 
\includegraphics[width=0.45\columnwidth]{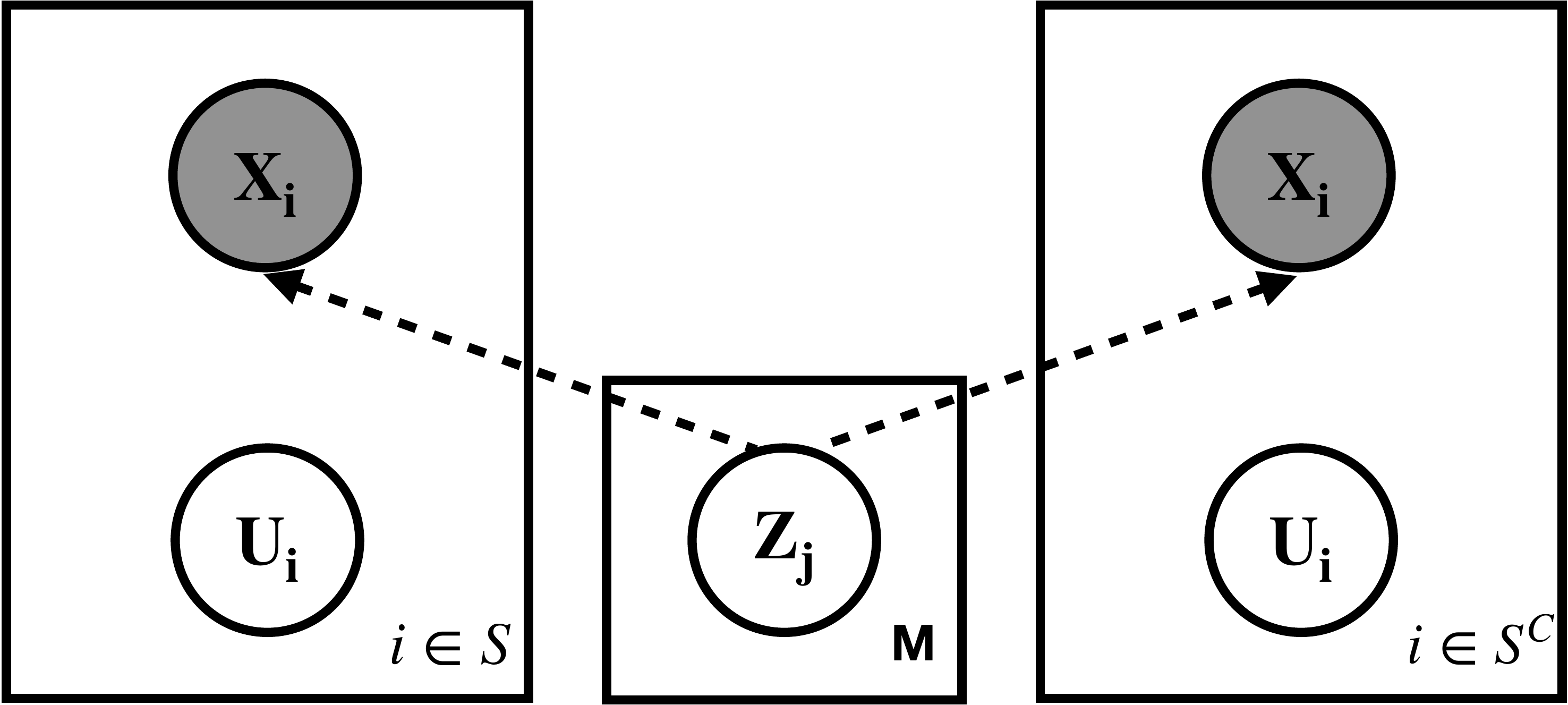} \\
$\gG^q_{\mathrm{joint}}$ & $\gG^p_{\mathrm{joint}}$\\
\includegraphics[width=0.45\columnwidth]{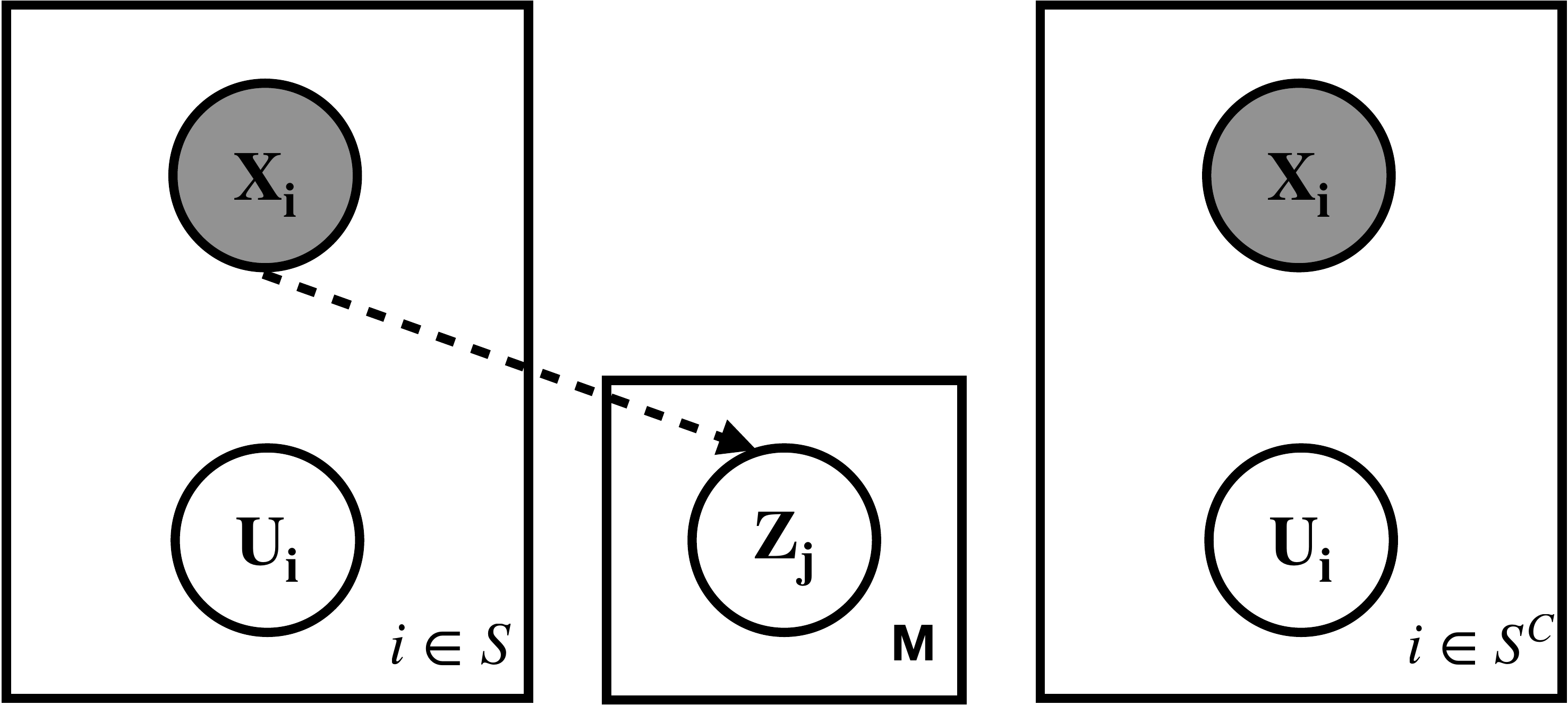} &
\includegraphics[width=0.45\columnwidth]{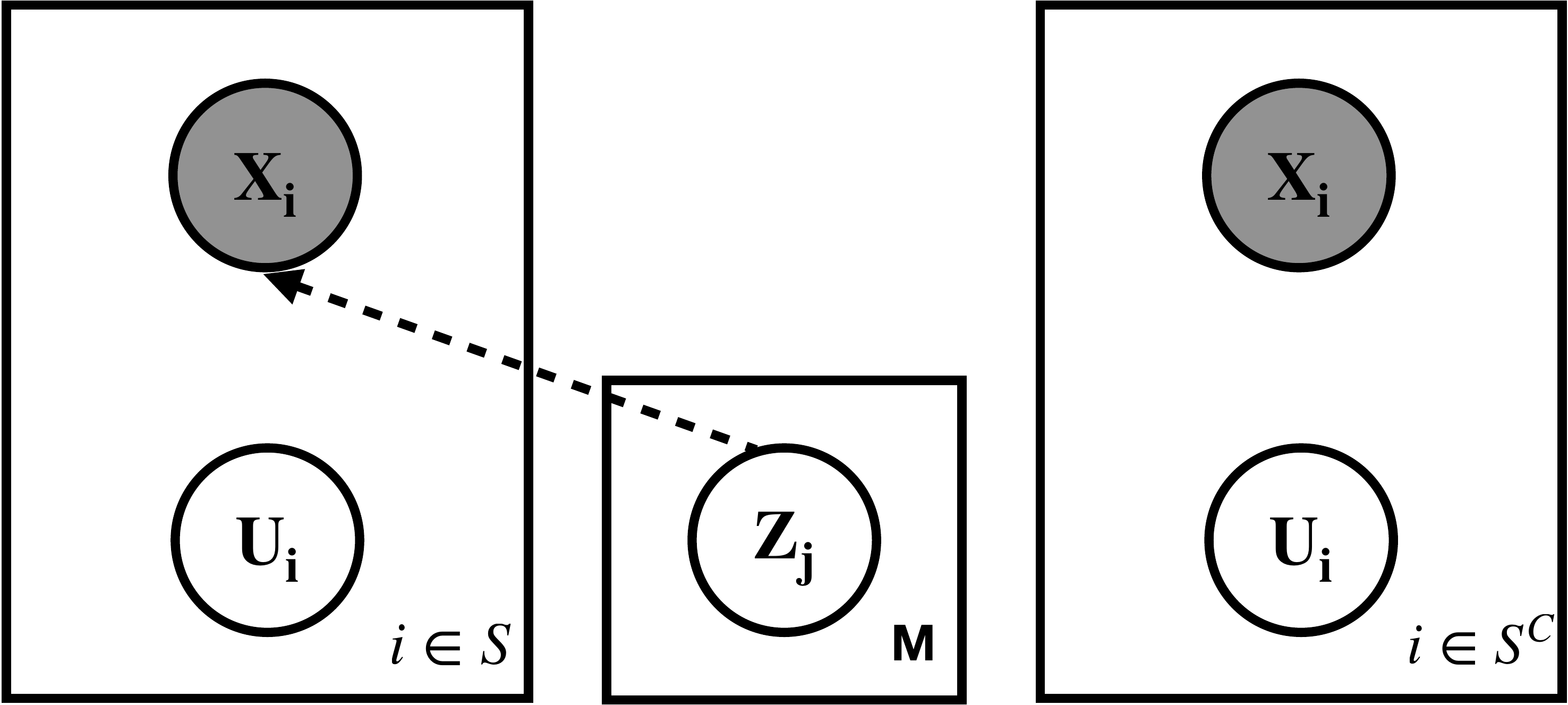}\\
$\gG^q_{\mathrm{marginal}}$ & $\gG^{p}_{\mathrm{marginal}}$\\
\includegraphics[width=0.45\columnwidth]{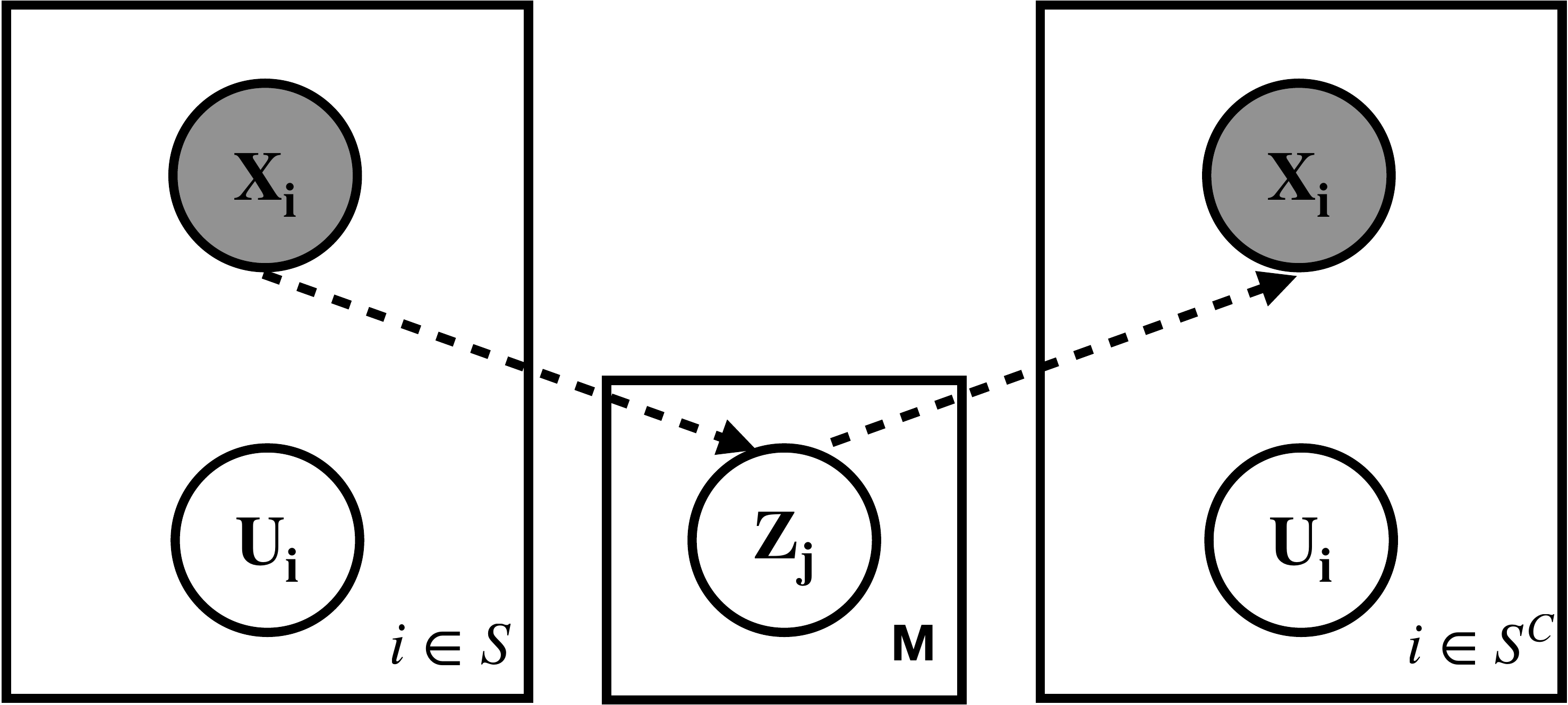} &
\includegraphics[width=0.45\columnwidth]{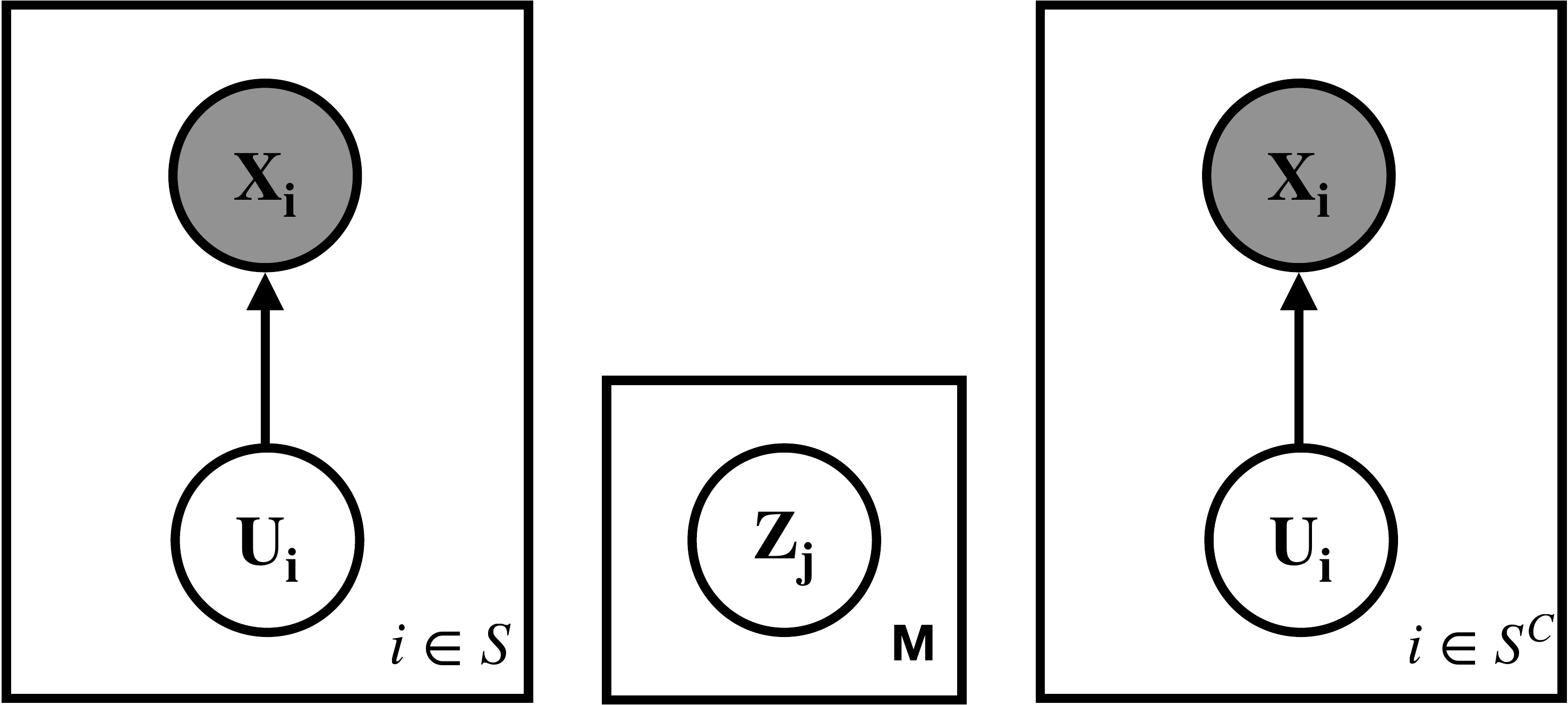}\\
$\gG^{\mathrm{str}}_{\mathrm{cross}}$ & $\gG^{\mathrm{str}}_{\mathrm{private}}$\\
\end{tabular}
\caption{Bayesian networks of various inference, generation models and structural regularizations in multi-modal/domain/view setting.}
\label{fig:bn_mvae}
\end{center}
\vskip -0.2in
\end{figure}

\subsection{Single-modal generative model}
\label{sec:single_model_model}
\textbf{Framework specification}~
In single-modal data generative modeling setting, we have $N = 1$ observed variable $\rX \equiv \left[\rX_{1}\right]$ which could be in image, text or other modalities, and we only incorporate private latent variables $\rU$.
We abuse the notation a little by assuming $M$ latent variables $\rU \equiv \left[\rU_1, \rU_2, \ldots, \rU_M\right]$\footnote{because we can define arbitrary dimension for $\rU$.}.

\textbf{A unified view}
We show that our proposed model unifies many existing generative models.
We show that we can impose disentanglement as a special case of the structural regularization in latent space to obtain different existing disentangled representation learning methods.
We summarize how existing generative models can be unified within our proposed information-theoretic framework in table~\ref{tb:unified_models}.
As an interesting example, we show that we can derive the $\beta$-vae objective with $\Ls = C_1 + (\beta - 1) C_3 + (\beta - 1)\lreg(\gG^{\emptyset})$, 
where we impose the structural regularization $(\beta - 1)\projkl{q_\rvphi}{\gG^{\emptyset}}$.
In this way, we also established connections to the results in~\cite{beta_vae_prior,ddvae} that $\beta$-vae is optimizing \emph{ELBO} with a $\qq$-dependent implicit prior $r(\rvu) \propto \qq(\rvu)^{1 - \beta}\pp(\rvu)^\beta$,
we achieve this in a symmetric way by using a $\pp$-dependent posterior $\qq(\rvz \mid \rvx) \propto \pp(\rvz) \prod_{i=1}^{N} \frac{\qq(\rvz \mid \rvx_i)}{\pp(\rvz)}$.
We further show that how we can unify other total-correlation based disentangled representation learning models~\cite{tcvae,hfvae,kimdisentangle} by explicitly imposing Bayesian structure $\gG^p$ as structural regularization.
We include detailed discussions and proofs in Appendix.~\ref{ap:sm}.

\subsection{Multi-modal/domain/view generative model}
\label{sec:multi_modal_model}
\textbf{Problem setup}
We represent the observed variables as $\rX_{1: N} \equiv \left[\rX_{1}, \rX_{2}, \ldots, \rX_{N}\right]$, where we have $N$ observed variables in different domains\footnote{We use the word domain to represent domain/modality/view.} and they might be statistically dependent.
We thus aim to learn latent factors $\rZ$ that explains the potential correlations among $\rX$.
Meanwhile, we also learn latent factors $\rU_i$ that explains the variations exclusive to one specific observed variable $\rX_i$.
In this way, we could achieve explicit control over the domain-dependent and domain-invariant latent factors.
For more details of the data generation process for this task and the model, please see Appendix.~\ref{ap:exp_mm}.


\textbf{A unified view}
We summarize the key results of unifying many existing multi-domain generative models in Table.~\ref{tb:unified_models}. We prove and discuss some interesting connections to related works in more details in Appendix.~\ref{ap:mm}, including BiVCCA~\cite{bivcca}, JMVAE~\cite{jmvae}, TELBO~\cite{telbo}, MVAE~\cite{mvae}, WynerVAE~\cite{wynervae}, DIVA~\cite{diva} and CorEx~\cite{corex,corex-hierarchical,corex-infosieve,corex-vae}.

\textbf{Framework specification}~
We present a specific implementation of our proposed framework for multi-domain generative modeling here.
We show that it generalizes some heuristics used in previous models and demonstrate its effectiveness in several standard multi-modal datasets.
We use $\ldist$ in Table~\ref{tb:pool_ldist} to learn consistent inference model and joint, marginal, conditional generation model over $\left( \rX, \rZ, \rU\right)$.
To embed multi-domain data into a shared latent space, we use the structural regularization that enforces Markov conditional independence structure $\rX^{\sS} \rightarrow \rZ \rightarrow \rX^{\sS^\complement}$.
This structural regularization can be represented by $\gG^{\mathrm{str}}_{\mathrm{cross}}$ in Figure~\ref{fig:bn_mvae}, where $\rX \equiv \left[\rX^{\sS}, \rX^{\sS^\complement} \right]$ is a random bi-partition of $\rX$.
Then we show that the objective can be upper-bounded by $\Ls = \ldist + \lreg \le \Ls_{\rvx} + \Ls_{\rvu} + \Ls_{\rvz}$, where $\Ls_{\rvx}=-\E_{\qq(\rvz, \rvu \mid \rvx)}\log \pp(\rvx \mid \rvz, \rvu)$, $\Ls_{\rvu} = \E_{\qq(\rvx)}\kl{\qq(\rvu \mid \rvx)}{\pp(\rvu)}$ and $\Ls_{\rvz}=\sum_{i=0}^N \E_{\qq(\rvx)}\kl{\qq(\rvz \mid \rvx)}{\qq(\rvz \mid \rvx_i)}$.
We use $\qq(\rvz \mid \rvx_0) \equiv \pp(\rvz)$ for the simplicity of notations.
We further show that for each latent variable $\rZ_j$, $\Ls_{\rvz_j}$ term can be viewed as a generalized JS-divergence~\cite{jsd_abs_mean} among $\qq(\rvz_j \mid \rvx_i)$ for $i \in \left\{1,\ldots,N \right\}$ using geometric-mean weighted by $\rvm^q_j$, which can be seen as a generalization of the implicit prior used in $\beta$-vae as discussed in~\ref{sec:single_model_model}.
The detailed proof is presented in~ Appendix.~\ref{ap:mm}.
\begin{equation}
\begin{aligned}
&\Ls_{\rvz_j} = D^{\rvm^q_j}_{\mathrm{JS}}\left(\qq(\rvz_j \mid \rvx_0), \qq(\rvz_j \mid \rvx_1), \ldots, \qq(\rvz_j \mid \rvx_N) \right)
\end{aligned}
\label{eq:jsd_objective}
\end{equation}

\begin{figure}[ht]
\vskip 0.2in
\begin{center}
\begin{tabular}{cc}
\includegraphics[width=0.44\columnwidth]{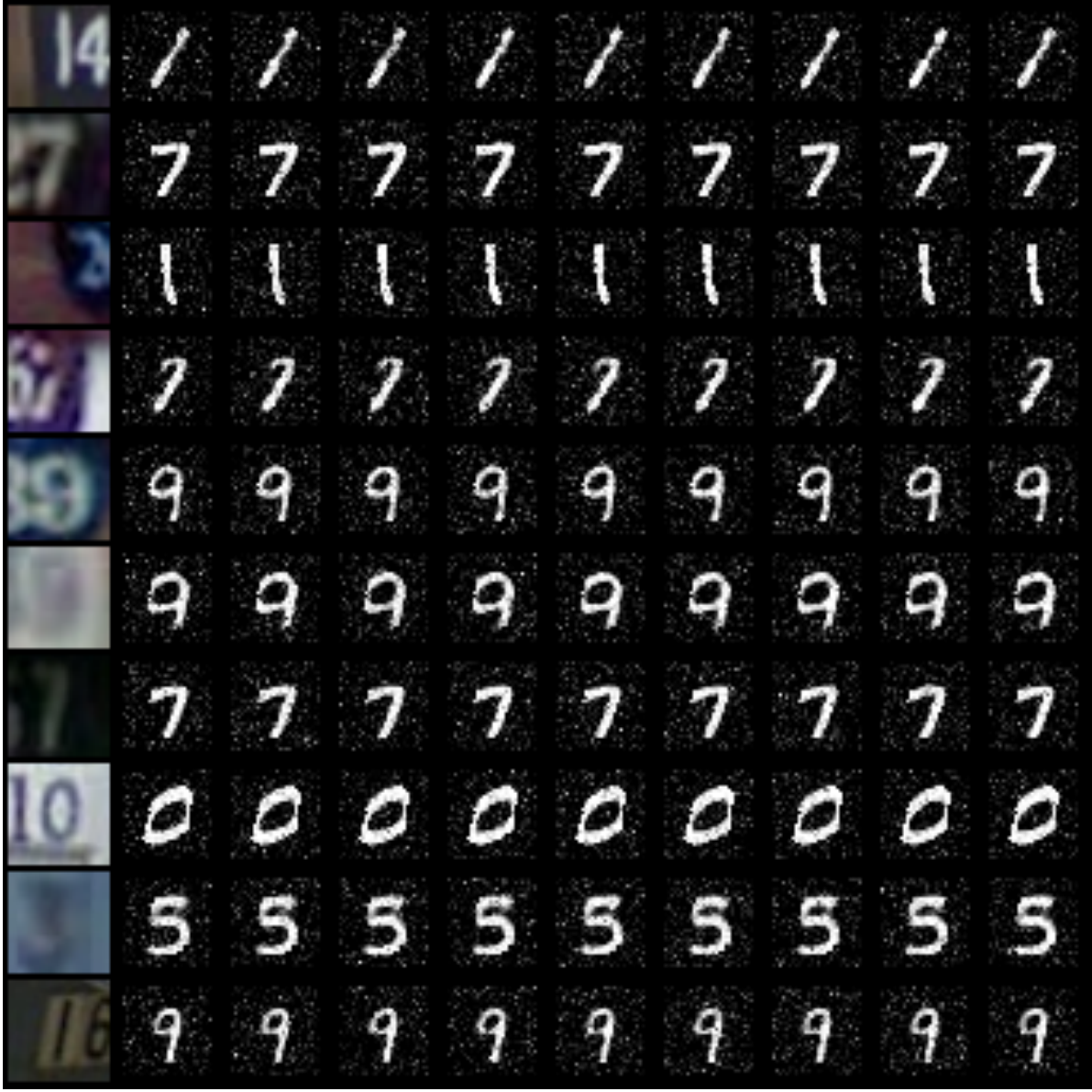} & 
\includegraphics[width=0.48\columnwidth]{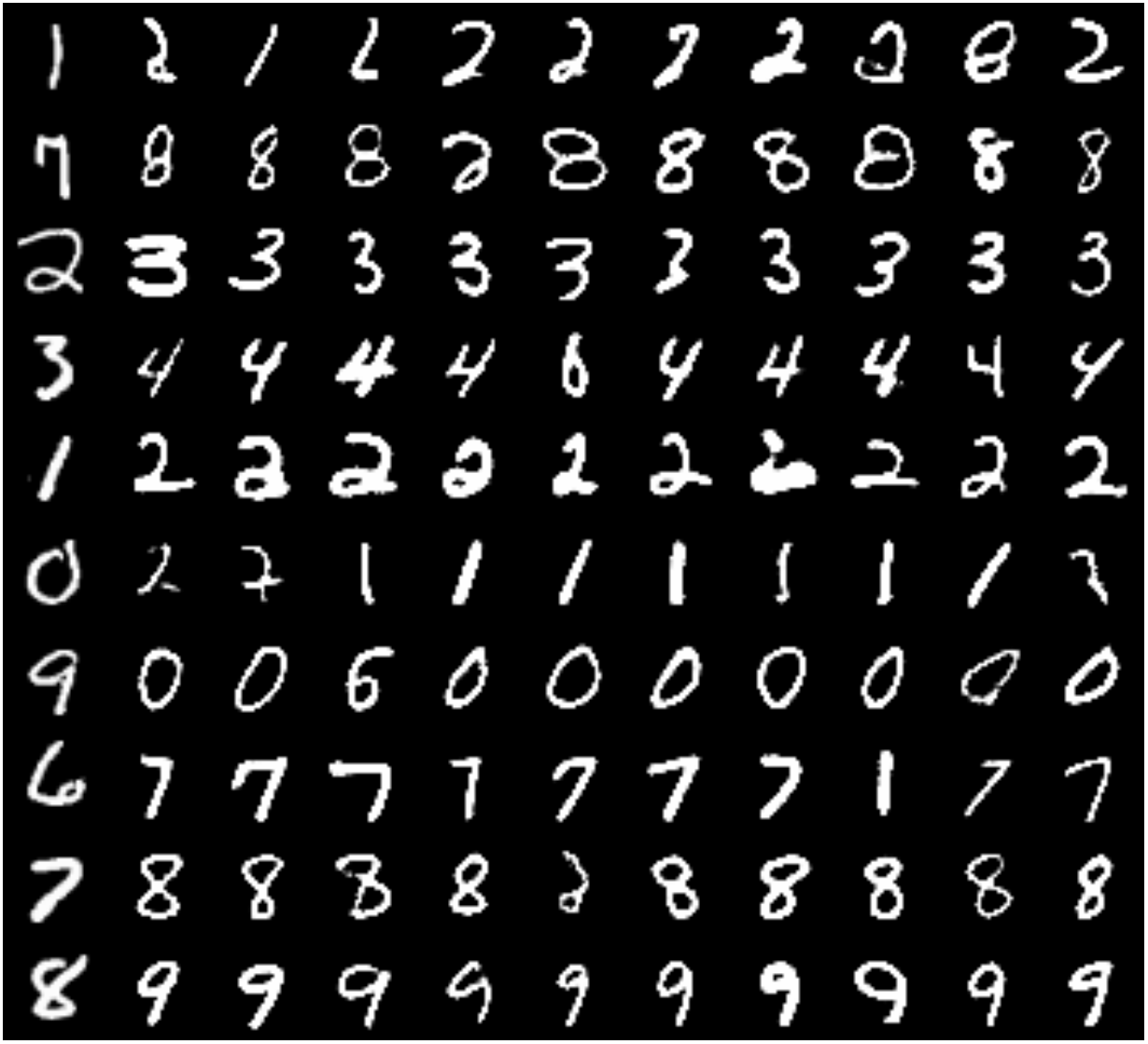}\\
SVHN $\rightarrow$ MNIST & MNIST $\rightarrow$ MNIST-Plus-1
\end{tabular}
\caption{Cross-domain generation samples. The leftmost column shows conditioned inputs.}
\label{fig:samples}
\end{center}
\vskip -0.2in
\end{figure}

\textbf{Experiment}
We validate the effectiveness of proposed model in multi-view/modal data modeling setting on bi-modal MNIST-Label, MNIST-SVHN and bi-view MNIST-MNIST-Plus-1 dataset. We show the generated samples in Figure~\ref{fig:samples}. 
The left panel in the figure contains the examples of MNIST-style samples generated by the model trained on MNIST-SVHN dataset when conditioned on SVHN data examples. 
We can observe that the model is using the shared latent variable $\rZ$ and private latent variables $\rU$ to successfully generate the MNIST-style samples of same digit as the SVHN inputs.
On the other hand, the right panel contains the examples of MNIST-style sample generated by the model trained on MNIST-Plus-1 dataset by conditioning on MNIST example.
We can observe that the model is successfully generating $m+1$ digit images when conditioned on $m$ digit input. 
More detailed results are included in the Appendix.~\ref{ap:exp_mm}.

\section{Case study: Fair Representation Learning}
\label{sec:fairness}
In this section, we show that fair representation learning can be viewed as a structured latent space learning problem, where we aim to learn a latent subspace that is invariant to sensitive attributes while informative about target label.
\begin{figure}[t]
\vskip 0.1in
\begin{center}
\begin{tabular}{ccc}
\includegraphics[width=0.3\columnwidth]{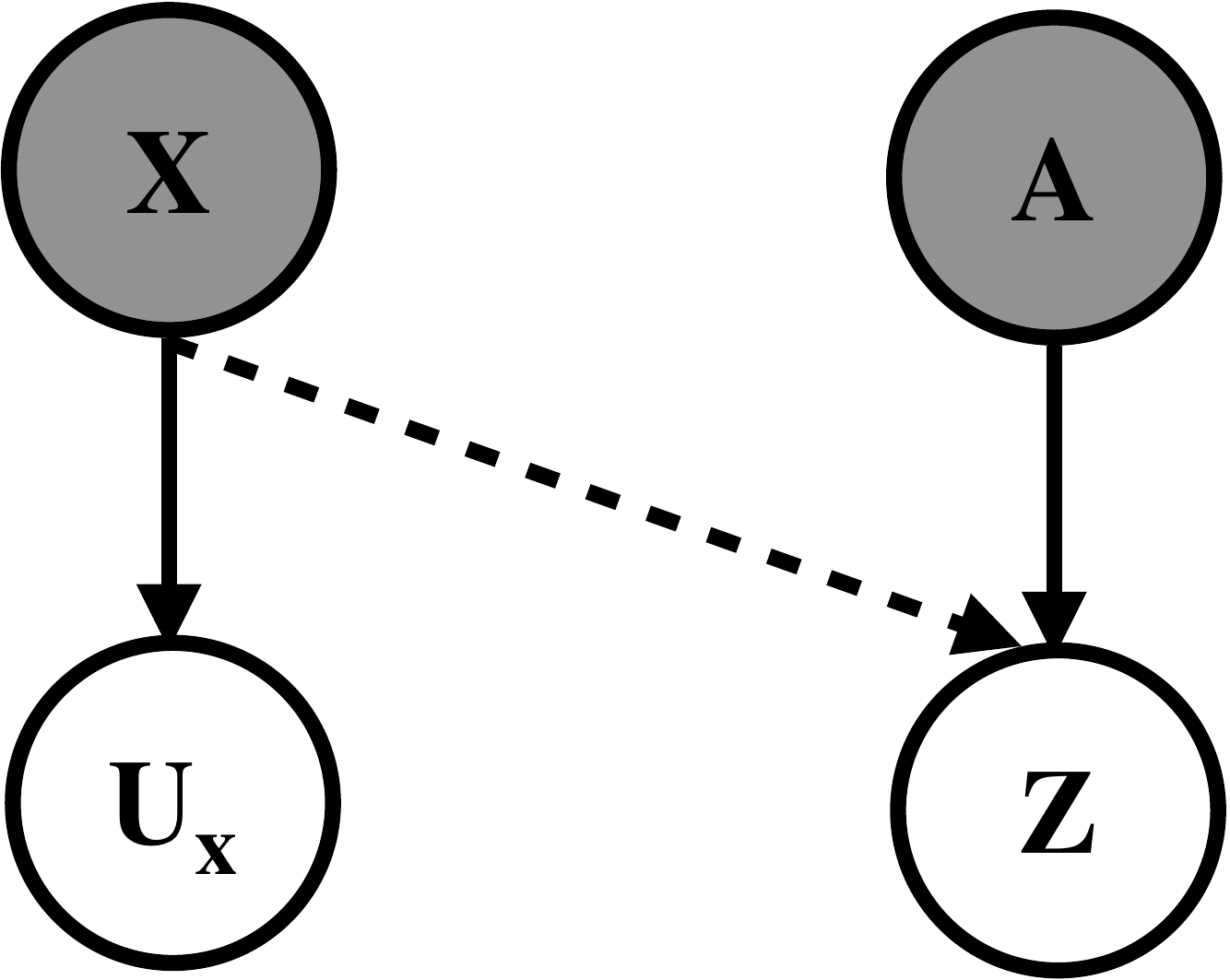} & 
\includegraphics[width=0.3\columnwidth]{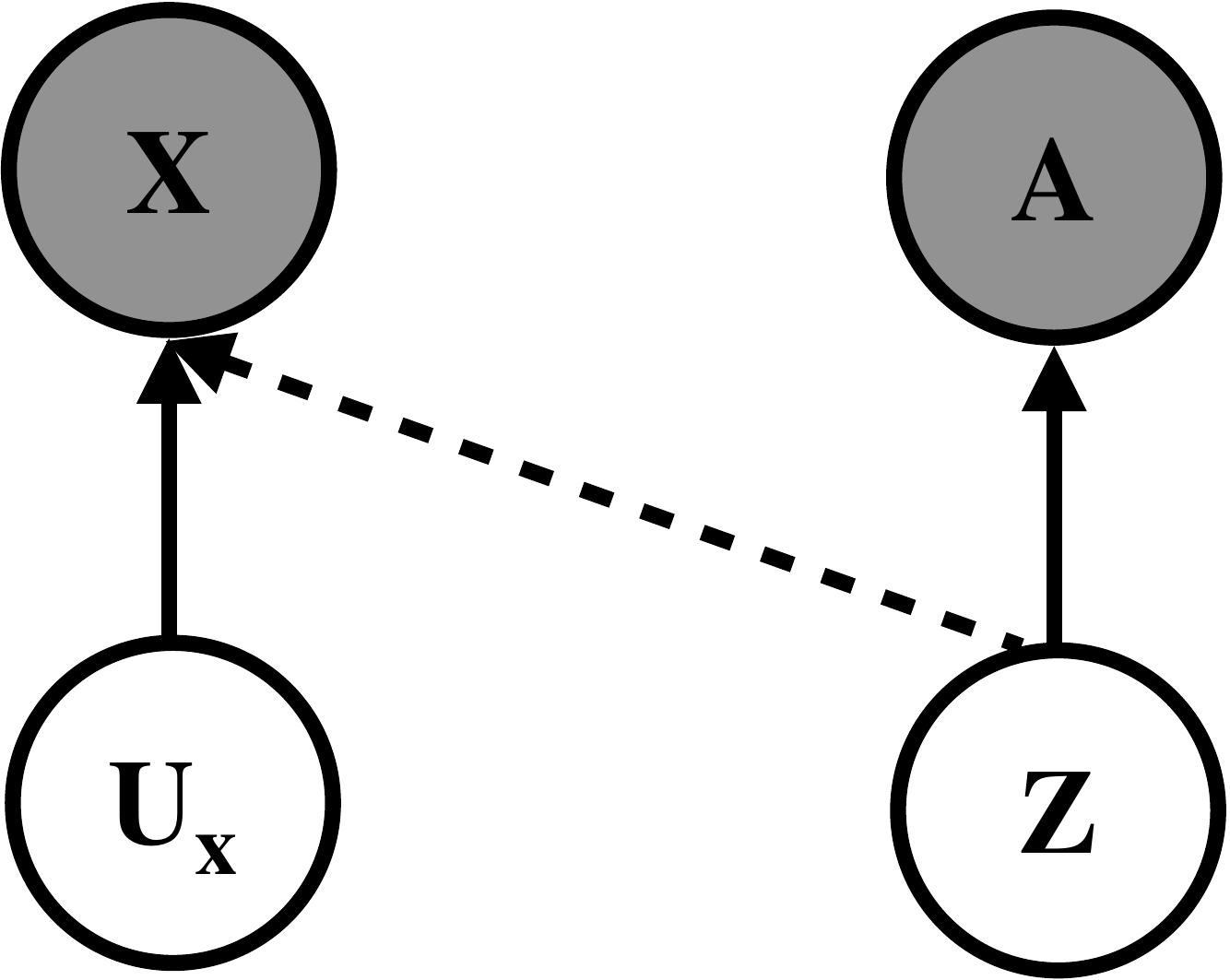} &
\includegraphics[width=0.3\columnwidth]{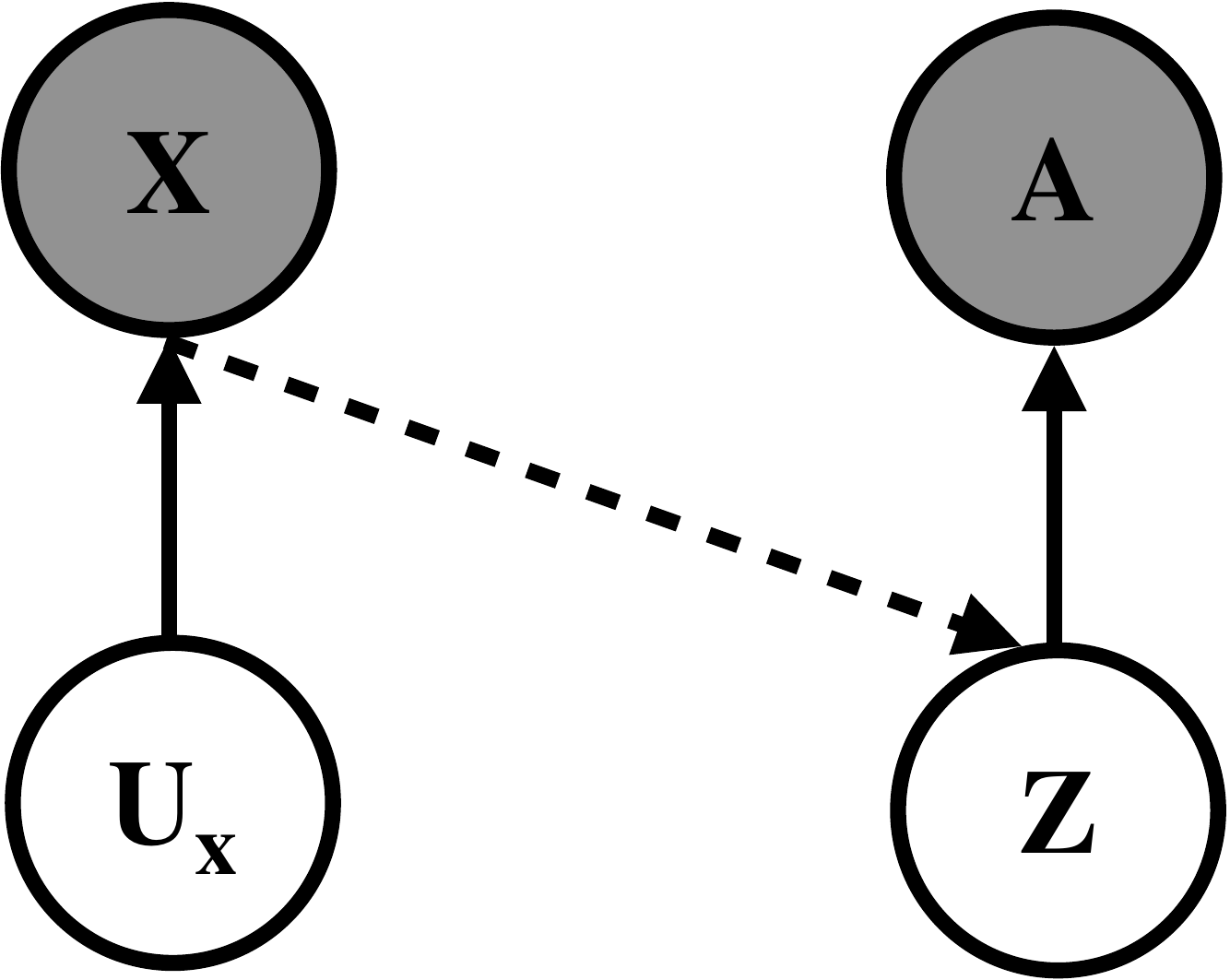} \\
$\gG^q$ & $\gG^p,\gG^{\mathrm{str}}_{\mathrm{invariant}}$ & $\gG^{\mathrm{str}}_{\mathrm{informative}}$\\
\end{tabular}
\caption{Bayesian networks for fair representation learning.}
\label{fig:bn_fair}
\end{center}
\vskip -0.2in
\end{figure}

\textbf{Problem setup}
We use $\left[\rX, \rA, \rY \right]$ to represent the observed variables, where the variable $\rX$ represents the multivariate raw observation like pixels of image sample, the variable $\rA$ represents the sensitive attributes, and the variable $\rY$ represents the target label to be predicted.
Following the same setting in previous works~\cite{fair_lagvae,fair_flexible}, the target label is not available during training phase.
A linear classifier using learned representation is trained to predict the held-out label $\rY$ in testing time.
We focus on the \emph{Difference of Equal Opportunity}~(DEO) notion in this work~\cite{fair_eod}.
For the details of the data generation process, please see Appendix.~\ref{ap:fairness}.

\textbf{Framework specification}
we learn a joint distribution over $\left[\rX, \rA, \rZ, \rU\right]$ with the framework proposed.
The shared latent variable $\rZ$ aims to explain the hidden correlation among $\rX$ and $\rZ$.
We also enforce two structural regularizations, represented by   
two Bayesian networks $\gG^{\mathrm{str}}_{\mathrm{invariant}}$ and  $\gG^{\mathrm{str}}_{\mathrm{informative}}$.  
The aim of $\gG^{\mathrm{str}}_{\mathrm{invariant}}$ is to learn the private latent variables $\rU_{\rvx}$ as the hidden factors that are invariant to the change of $\rZ$. 
Meanwhile, the aim of $\gG^{\mathrm{str}}_{\mathrm{informative}}$ is to 
preserve as much information about $\rX$ in $\rZ$.  
$\rrM^q$ and $\rrM^p$ are illustrated by $\gG^q$ and $\gG^p$ in Figure~\ref{fig:bn_fair} correspondingly.
Then we have the following learning objective
\begin{equation}
\begin{aligned}
&\Ls \le -\E_{\qq}\log \pp(\rvx, \rva \mid \rvz, \rvu) + \beta_2 \miq{\rvz}{\rvu} +\\
&(1+\beta_1) \E_{\qq} \kl{\qq(\rvz \mid \rvx, \rva)}{\pp(\rvz)} + const
\end{aligned}
\label{eq:fair_objective}
\end{equation}
Please refer to Appendix.~\ref{ap:fairness} for the detailed derivation and discussion.

\begin{table}[t]
\caption{Fair representation learning results on German and Adult datasets.}
\label{tb:fair}
\begin{center}
\begin{small}
\begin{sc}
\begin{tabular}{lcccr}
\toprule
Model & \multicolumn{2}{c}{Adult} & \multicolumn{2}{c}{German} \\
      & ACC & DEO & ACC & DEO\\ 
\midrule
Naive SVM    & $0.80$ & $0.09$ & $0.74 \pm 0.05$ & $0.12 \pm 0.05$ \\
SVM          & $0.79$ & $0.08$ & $0.74 \pm 0.03$ & $0.10 \pm 0.06$ \\
NN           & $0.84$ & $0.14$ & $0.74 \pm 0.04$ & $0.47 \pm 0.19$ \\
NN $+ \chi^{2}$    & $0.83$ & $0.03$ & $0.73 \pm 0.03$ & $0.25 \pm 0.14$ \\
FERM         & $0.77$ & $0.01$ & $0.73 \pm 0.04$ & $0.05 \pm 0.03$ \\
Ours-MMD     & $0.83$ & $0.02$ & $0.72 \pm 0.07$ & $0.07 \pm 0.09$ \\
Ours-TC     & $0.81$ & $0.02$ & $0.74 \pm 0.08$ & $0.08 \pm 0.14$ \\
Ours-MINE     & $0.79$ & $0.01$ & $0.70 \pm 0.11$ & $0.05 \pm 0.11$ \\
\bottomrule
\end{tabular}
\end{sc}
\end{small}
\end{center}
\vskip -0.1in
\end{table}
\textbf{Experiments} 
We investigate the performance of our derived objective on the UCI German credit dataset and the UCI Adult dataset. 
For estimating and minimizing $\miq{\rvz}{\rvu}$, we adopted MMD~\cite{mmd_nips}, total-correlation estimator in~\cite{hfvae} and MINE~\cite{mine} and summarize all results in Table~\ref{tb:fair}.
We report the classification accuracy (ACC) and the aforementioned DEO in the table.
The performances of all other baseline methods in the table are from~\cite{fair_continuous,fair_erm}. 
Please refer to Appendix.~\ref{ap:exp_fair} for more details.

\section{Case study: Out-of-Distribution Generalization}
\label{sec:irm}
\begin{figure}[ht]
\vskip 0.1in
\begin{center}
\begin{tabular}{ccc}
\includegraphics[width=0.3\columnwidth]{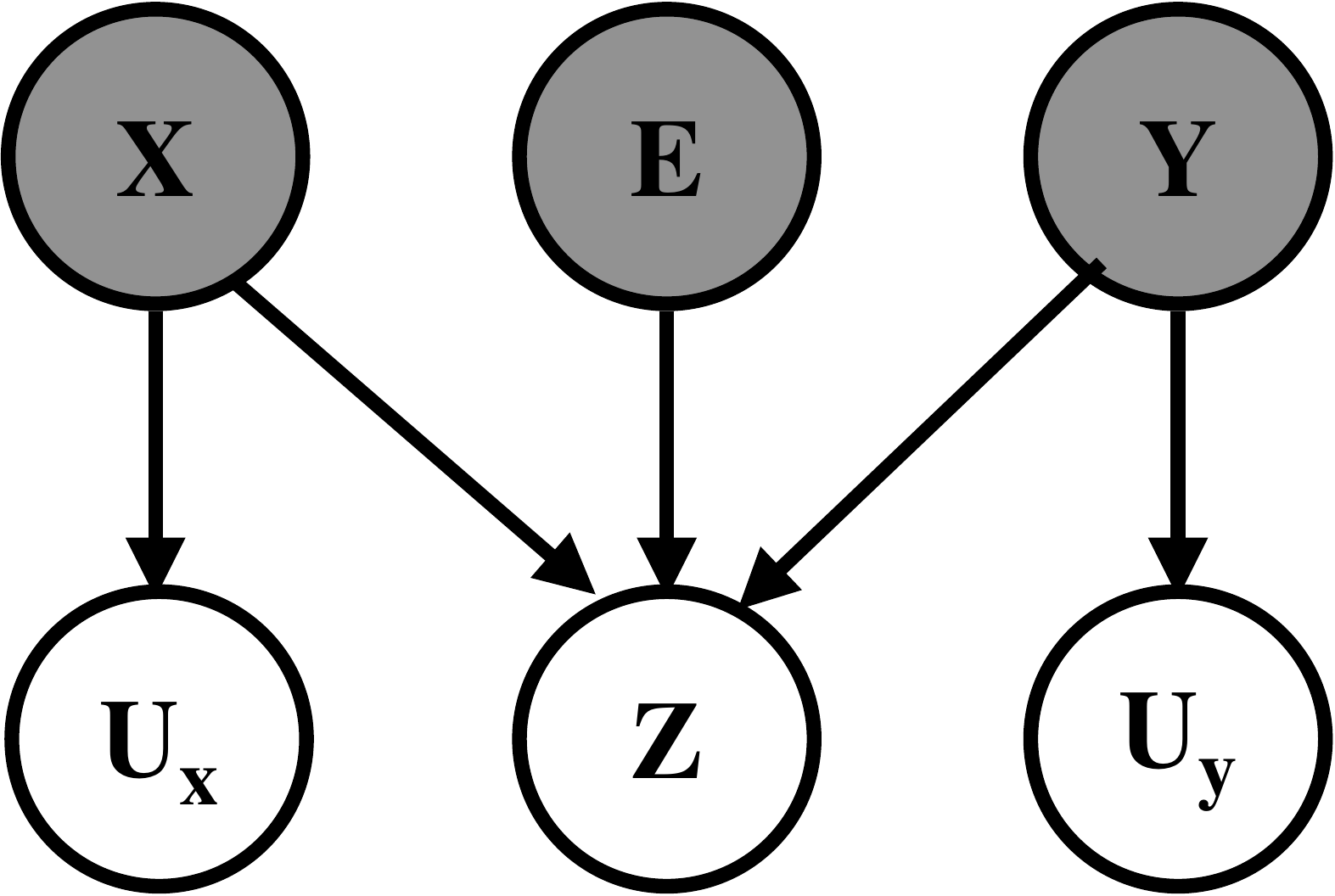} & 
\includegraphics[width=0.3\columnwidth]{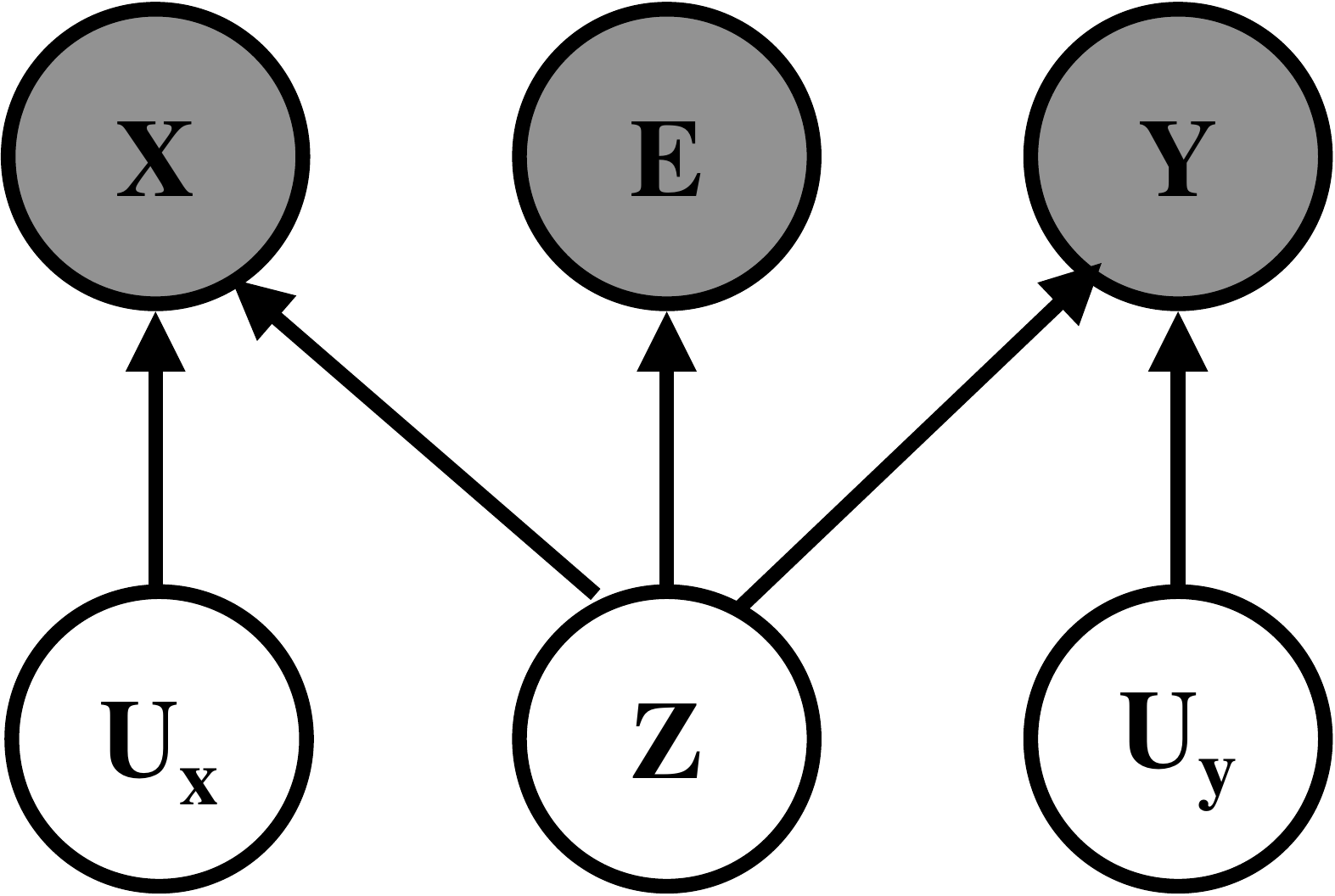} &
\includegraphics[width=0.3\columnwidth]{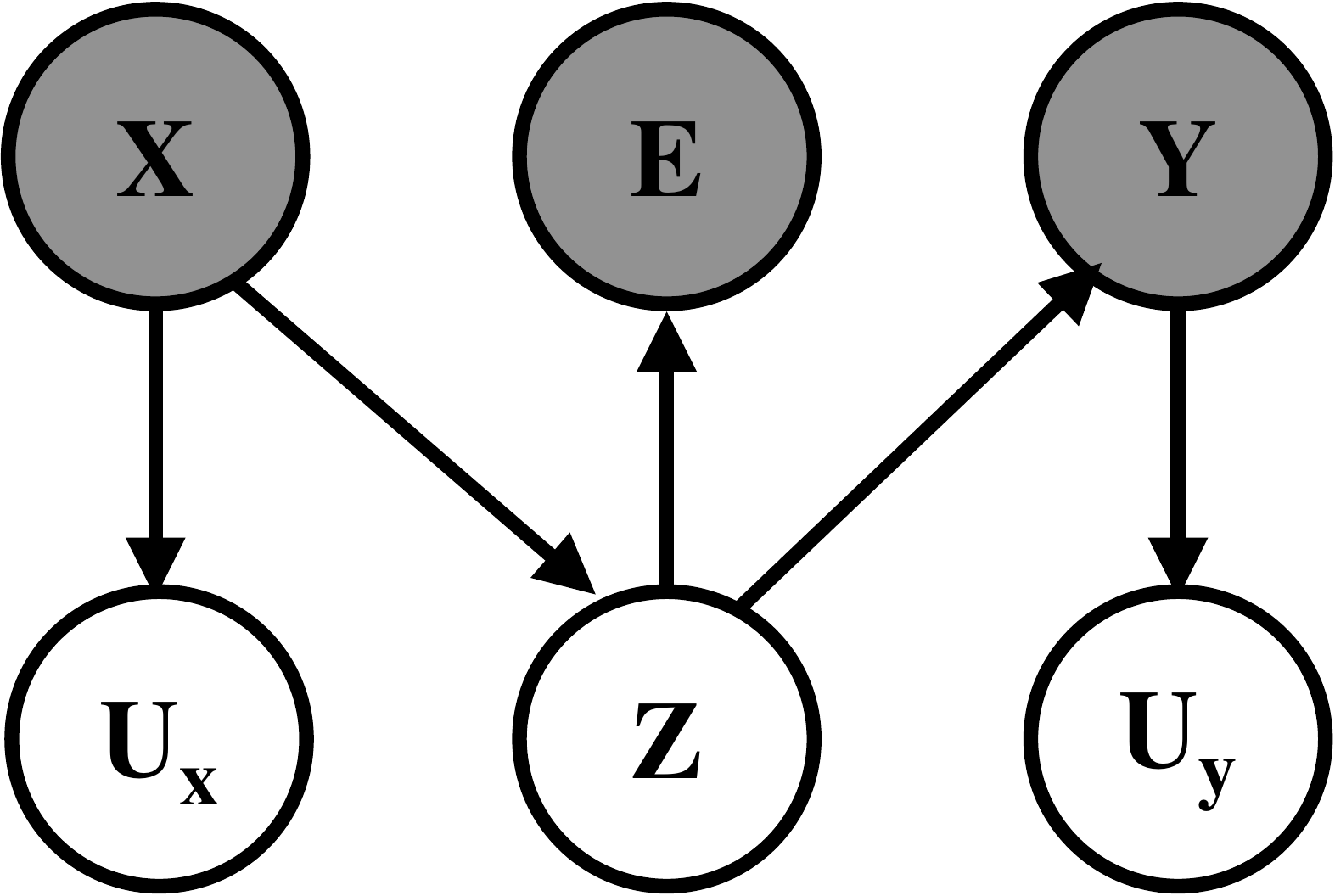} \\
$\gG^q$ & $\gG^p$ & $\gG^{\mathrm{str}}_{\mathrm{ood}}$\\
\end{tabular}
\caption{Bayesian networks for out-of-distribution generalization task. $\rE$ in the diagram represents the index of the environmental factor, not the real value of $\rE$ in the data generation process}
\label{fig:bn_irm}
\end{center}
\vskip -0.2in
\end{figure}

\textbf{Problem setup}
We show that discovering of true causation against spurious correlation through invariance can be viewed as a structured latent representation learning problem.
Consider a set of environments $\gE$ indexed by $\rE$, we have a data distribution $P^e(\rX, \rY)$ for each environment $\rE = e$.
We use $\left[ \rX, \rY, \rE \right]$ to represent the observed variables, where $\rX$ is data input, $\rY$ is label and $\rE$ is the index of the corresponding environment index.
The goal of this task is predict $Y$ from $X$ in a way that the performance of the predictor in the presence of the worst $\rE$ is optimal. 
We derive an information-theoretic objective for out-of-distribution generalization task on Colored-MNIST dataset introduced in \cite{irm}.
For more details of this experiment, please see the Appendix.~\ref{ap:exp_irm} as well as the original work ~\cite{causal_invariance,irm}.  

\textbf{Framework specification}~
As our structural regularization,  we use the Bayesian network $\gG^{\mathrm{str}}_{\mathrm{ood}}$ in Figure~\ref{fig:bn_irm}.
The purpose of $\gG^{\mathrm{str}}_{\mathrm{ood}}$ is to enforce that $Z$ is sufficient statistic in making the prediction of $Y$ and that $E \perp Y | Z$. 
We present the derived learning objective here
\begin{equation}
\begin{aligned}
&\Ls_{\mathrm{info}} = \ldist + \beta_1 \kl{\qq(\rvz \mid \rvx, \rve, \rvy)}{\qq(\rvz \mid \rvx)} + \\
& \beta_2 \cI_q(\rvx,\rve,\rvy \mid \rvz)
\end{aligned}
\end{equation}
We further show that the idea in~\cite{irm} can be directly integrated into our proposed framework by imposing stable $\rrM^p$ structure as constraints across environments, measured by gradient-penalty, as discussed in Appendix.~\ref{ap:irm}.

\begin{table}[t]
\caption{Out-of-distribution generalization results on Colored-MNIST}
\label{tb:irm}
\vskip 0.15in
\begin{center}
\begin{small}
\begin{sc}
\begin{tabular}{lcc}
\toprule
Model      & ACC. train envs. & Acc. test env.\\ 
\midrule
Random       & $50$ & $50$\\
Optimal      & $75$ & $75$\\
Oracle       & $73.5 \pm 0.2$ & $73.0 \pm 0.4$\\
ERM          & $87.4 \pm 0.2$ & $17.1 \pm 0.6$\\
IRM          & $70.8 \pm 0.9$ & $66.9 \pm 2.5$ \\
Ours-full    & $67.8 \pm 6.8$ & $62.1 \pm 6.1$ \\
Ours-semi    & $71.4 \pm 6.1$ & $58.7 \pm 7.2$\\
\bottomrule
\end{tabular}
\end{sc}
\end{small}
\end{center}
\vskip -0.1in
\end{table}

\textbf{Experiments}
We validate the proposed model on the Colored-MNIST classification task introduced in~\cite{irm}.
We also take the advantage of our proposed framework as a generative model that we could perform semi-supervised learning, where we use only $50\%$ labeled data.
We include more training setting details in Appendix.~\ref{ap:exp_irm}. 
We compare our model against the baselines in Table~\ref{tb:irm}.
We see that our proposed information-theoretic objective achieves comparable performance in both supervised and semi-supervised setting on the test-environment.

\section{Conclusion}
In this work, we propose a general information-theoretic framework for learning structured latent factors from multivariate data, by generalizing the multivariate information bottleneck theory.
We show that the proposed framework can provide an unified view of many existing methods and insights on new models for many different challenging tasks like fair representation learning and out-of-distribution generalization.

\bibliography{main_arxiv}

\begin{thebibliography}{46}
\providecommand{\natexlab}[1]{#1}
\providecommand{\url}[1]{\texttt{#1}}
\expandafter\ifx\csname urlstyle\endcsname\relax
  \providecommand{\doi}[1]{doi: #1}\else
  \providecommand{\doi}{doi: \begingroup \urlstyle{rm}\Url}\fi

\bibitem[Arjovsky et~al.(2019)Arjovsky, Bottou, Gulrajani, and
  Lopez{-}Paz]{irm}
Arjovsky, M., Bottou, L., Gulrajani, I., and Lopez{-}Paz, D.
\newblock Invariant risk minimization.
\newblock \emph{CoRR}, abs/1907.02893, 2019.
\newblock URL \url{http://arxiv.org/abs/1907.02893}.

\bibitem[Balog et~al.(2017)Balog, Tripuraneni, Ghahramani, and
  Weller]{gumbel_family}
Balog, M., Tripuraneni, N., Ghahramani, Z., and Weller, A.
\newblock Lost relatives of the gumbel trick.
\newblock In Precup, D. and Teh, Y.~W. (eds.), \emph{Proceedings of the 34th
  International Conference on Machine Learning, {ICML} 2017, Sydney, NSW,
  Australia, 6-11 August 2017}, volume~70 of \emph{Proceedings of Machine
  Learning Research}, pp.\  371--379. {PMLR}, 2017.
\newblock URL \url{http://proceedings.mlr.press/v70/balog17a.html}.

\bibitem[Belghazi et~al.(2018)Belghazi, Rajeswar, Baratin, Hjelm, and
  Courville]{mine}
Belghazi, I., Rajeswar, S., Baratin, A., Hjelm, R.~D., and Courville, A.~C.
\newblock {MINE:} mutual information neural estimation.
\newblock \emph{CoRR}, abs/1801.04062, 2018.
\newblock URL \url{http://arxiv.org/abs/1801.04062}.

\bibitem[Chen et~al.(2018)Chen, Li, Grosse, and Duvenaud]{tcvae}
Chen, T.~Q., Li, X., Grosse, R.~B., and Duvenaud, D.
\newblock Isolating sources of disentanglement in variational autoencoders.
\newblock In Bengio, S., Wallach, H.~M., Larochelle, H., Grauman, K.,
  Cesa{-}Bianchi, N., and Garnett, R. (eds.), \emph{Advances in Neural
  Information Processing Systems 31: Annual Conference on Neural Information
  Processing Systems 2018, NeurIPS 2018, 3-8 December 2018, Montr{\'{e}}al,
  Canada}, pp.\  2615--2625, 2018.
\newblock URL \url{https://bit.ly/37WQG7X}.

\bibitem[Creager et~al.(2019)Creager, Madras, Jacobsen, Weis, Swersky, Pitassi,
  and Zemel]{fair_flexible}
Creager, E., Madras, D., Jacobsen, J., Weis, M.~A., Swersky, K., Pitassi, T.,
  and Zemel, R.~S.
\newblock Flexibly fair representation learning by disentanglement.
\newblock In Chaudhuri, K. and Salakhutdinov, R. (eds.), \emph{Proceedings of
  the 36th International Conference on Machine Learning, {ICML} 2019, 9-15 June
  2019, Long Beach, California, {USA}}, volume~97 of \emph{Proceedings of
  Machine Learning Research}, pp.\  1436--1445. {PMLR}, 2019.
\newblock URL \url{http://proceedings.mlr.press/v97/creager19a.html}.

\bibitem[Donahue et~al.(2017)Donahue, Kr{\"{a}}henb{\"{u}}hl, and
  Darrell]{bigan}
Donahue, J., Kr{\"{a}}henb{\"{u}}hl, P., and Darrell, T.
\newblock Adversarial feature learning.
\newblock In \emph{5th International Conference on Learning Representations,
  {ICLR} 2017, Toulon, France, April 24-26, 2017, Conference Track
  Proceedings}. OpenReview.net, 2017.
\newblock URL \url{https://openreview.net/forum?id=BJtNZAFgg}.

\bibitem[Donini et~al.(2018)Donini, Oneto, Ben{-}David, Shawe{-}Taylor, and
  Pontil]{fair_erm}
Donini, M., Oneto, L., Ben{-}David, S., Shawe{-}Taylor, J., and Pontil, M.
\newblock Empirical risk minimization under fairness constraints.
\newblock In Bengio, S., Wallach, H.~M., Larochelle, H., Grauman, K.,
  Cesa{-}Bianchi, N., and Garnett, R. (eds.), \emph{Advances in Neural
  Information Processing Systems 31: Annual Conference on Neural Information
  Processing Systems 2018, NeurIPS 2018, 3-8 December 2018, Montr{\'{e}}al,
  Canada}, pp.\  2796--2806, 2018.

\bibitem[Dumoulin et~al.(2017)Dumoulin, Belghazi, Poole, Lamb, Arjovsky,
  Mastropietro, and Courville]{ali}
Dumoulin, V., Belghazi, I., Poole, B., Lamb, A., Arjovsky, M., Mastropietro,
  O., and Courville, A.~C.
\newblock Adversarially learned inference.
\newblock In \emph{5th International Conference on Learning Representations,
  {ICLR} 2017, Toulon, France, April 24-26, 2017, Conference Track
  Proceedings}. OpenReview.net, 2017.
\newblock URL \url{https://openreview.net/forum?id=B1ElR4cgg}.

\bibitem[Elidan \& Friedman(2005)Elidan and Friedman]{mib_hidden}
Elidan, G. and Friedman, N.
\newblock Learning hidden variable networks: The information bottleneck
  approach.
\newblock \emph{J. Mach. Learn. Res.}, 6:\penalty0 81--127, 2005.
\newblock URL \url{http://jmlr.org/papers/v6/elidan05a.html}.

\bibitem[Esmaeili et~al.(2019)Esmaeili, Wu, Jain, Bozkurt, Siddharth, Paige,
  Brooks, Dy, and van~de Meent]{hfvae}
Esmaeili, B., Wu, H., Jain, S., Bozkurt, A., Siddharth, N., Paige, B., Brooks,
  D.~H., Dy, J.~G., and van~de Meent, J.
\newblock Structured disentangled representations.
\newblock In Chaudhuri, K. and Sugiyama, M. (eds.), \emph{The 22nd
  International Conference on Artificial Intelligence and Statistics, {AISTATS}
  2019, 16-18 April 2019, Naha, Okinawa, Japan}, volume~89 of \emph{Proceedings
  of Machine Learning Research}, pp.\  2525--2534. {PMLR}, 2019.
\newblock URL \url{http://proceedings.mlr.press/v89/esmaeili19a.html}.

\bibitem[Friedman(1998)]{structural_em}
Friedman, N.
\newblock The bayesian structural {EM} algorithm.
\newblock In Cooper, G.~F. and Moral, S. (eds.), \emph{{UAI} '98: Proceedings
  of the Fourteenth Conference on Uncertainty in Artificial Intelligence,
  University of Wisconsin Business School, Madison, Wisconsin, USA, July 24-26,
  1998}, pp.\  129--138. Morgan Kaufmann, 1998.
\newblock URL
  \url{https://dslpitt.org/uai/displayArticleDetails.jsp?mmnu=1\&smnu=2\&article\_id=241\&proceeding\_id=14}.

\bibitem[Friedman et~al.(2001)Friedman, Mosenzon, Slonim, and Tishby]{mib}
Friedman, N., Mosenzon, O., Slonim, N., and Tishby, N.
\newblock Multivariate information bottleneck.
\newblock In Breese, J.~S. and Koller, D. (eds.), \emph{{UAI} '01: Proceedings
  of the 17th Conference in Uncertainty in Artificial Intelligence, University
  of Washington, Seattle, Washington, USA, August 2-5, 2001}, pp.\  152--161.
  Morgan Kaufmann, 2001.
\newblock URL
  \url{https://dslpitt.org/uai/displayArticleDetails.jsp?mmnu=1\&smnu=2\&article\_id=95\&proceeding\_id=17}.

\bibitem[Gao et~al.(2019)Gao, Brekelmans, Steeg, and Galstyan]{corex-vae}
Gao, S., Brekelmans, R., Steeg, G.~V., and Galstyan, A.
\newblock Auto-encoding total correlation explanation.
\newblock In Chaudhuri, K. and Sugiyama, M. (eds.), \emph{The 22nd
  International Conference on Artificial Intelligence and Statistics, {AISTATS}
  2019, 16-18 April 2019, Naha, Okinawa, Japan}, volume~89 of \emph{Proceedings
  of Machine Learning Research}, pp.\  1157--1166. {PMLR}, 2019.
\newblock URL \url{http://proceedings.mlr.press/v89/gao19a.html}.

\bibitem[Gretton et~al.(2006)Gretton, Borgwardt, Rasch, Sch{\"{o}}lkopf, and
  Smola]{mmd_nips}
Gretton, A., Borgwardt, K.~M., Rasch, M.~J., Sch{\"{o}}lkopf, B., and Smola,
  A.~J.
\newblock A kernel method for the two-sample-problem.
\newblock In Sch{\"{o}}lkopf, B., Platt, J.~C., and Hofmann, T. (eds.),
  \emph{Advances in Neural Information Processing Systems 19, Proceedings of
  the Twentieth Annual Conference on Neural Information Processing Systems,
  Vancouver, British Columbia, Canada, December 4-7, 2006}, pp.\  513--520.
  {MIT} Press, 2006.
\newblock URL \url{https://bit.ly/2vIOzpV}.

\bibitem[Hardt et~al.(2016)Hardt, Price, and Srebro]{fair_eod}
Hardt, M., Price, E., and Srebro, N.
\newblock Equality of opportunity in supervised learning.
\newblock In Lee, D.~D., Sugiyama, M., von Luxburg, U., Guyon, I., and Garnett,
  R. (eds.), \emph{Advances in Neural Information Processing Systems 29: Annual
  Conference on Neural Information Processing Systems 2016, December 5-10,
  2016, Barcelona, Spain}, pp.\  3315--3323, 2016.

\bibitem[Higgins et~al.(2016)Higgins, Matthey, Pal, Burgess, Glorot, Botvinick,
  Mohamed, and Lerchner]{betavae}
Higgins, I., Matthey, L., Pal, A., Burgess, C., Glorot, X., Botvinick, M.,
  Mohamed, S., and Lerchner, A.
\newblock beta-vae: Learning basic visual concepts with a constrained
  variational framework.
\newblock In \emph{4th International Conference on Learning Representations,
  {ICLR} 2016}, 2016.
\newblock URL \url{https://openreview.net/forum?id=Sy2fzU9gl}.

\bibitem[Hinton(2002)]{hinton_poe}
Hinton, G.~E.
\newblock Training products of experts by minimizing contrastive divergence.
\newblock \emph{Neural Computation}, 14\penalty0 (8):\penalty0 1771--1800,
  2002.
\newblock \doi{10.1162/089976602760128018}.
\newblock URL \url{https://doi.org/10.1162/089976602760128018}.

\bibitem[Hoffman et~al.(2017)Hoffman, Riquelme, and Johnson]{beta_vae_prior}
Hoffman, M.~D., Riquelme, C., and Johnson, M.~J.
\newblock The $\beta$-vae’s implicit prior.
\newblock In \emph{Workshop on Bayesian Deep Learning, NIPS}, pp.\  1--5, 2017.

\bibitem[Hyv{\"{a}}rinen et~al.(2019)Hyv{\"{a}}rinen, Sasaki, and
  Turner]{nonlinear_ica_tcl}
Hyv{\"{a}}rinen, A., Sasaki, H., and Turner, R.~E.
\newblock Nonlinear {ICA} using auxiliary variables and generalized contrastive
  learning.
\newblock In Chaudhuri, K. and Sugiyama, M. (eds.), \emph{The 22nd
  International Conference on Artificial Intelligence and Statistics, {AISTATS}
  2019, 16-18 April 2019, Naha, Okinawa, Japan}, volume~89 of \emph{Proceedings
  of Machine Learning Research}, pp.\  859--868. {PMLR}, 2019.
\newblock URL \url{http://proceedings.mlr.press/v89/hyvarinen19a.html}.

\bibitem[Ilse et~al.(2019)Ilse, Tomczak, Louizos, and Welling]{diva}
Ilse, M., Tomczak, J.~M., Louizos, C., and Welling, M.
\newblock {DIVA:} domain invariant variational autoencoder.
\newblock In \emph{Deep Generative Models for Highly Structured Data, {ICLR}
  2019 Workshop, New Orleans, Louisiana, United States, May 6, 2019}.
  OpenReview.net, 2019.
\newblock URL \url{https://openreview.net/forum?id=SkgkEL8FdV}.

\bibitem[Jang et~al.(2017)Jang, Gu, and Poole]{gumbel-softmax}
Jang, E., Gu, S., and Poole, B.
\newblock Categorical reparameterization with gumbel-softmax.
\newblock In \emph{5th International Conference on Learning Representations,
  {ICLR} 2017, Toulon, France, April 24-26, 2017, Conference Track
  Proceedings}. OpenReview.net, 2017.
\newblock URL \url{https://openreview.net/forum?id=rkE3y85ee}.

\bibitem[Ke et~al.(2019)Ke, Bilaniuk, Goyal, Bauer, Larochelle, Pal, and
  Bengio]{mila_metacausal}
Ke, N.~R., Bilaniuk, O., Goyal, A., Bauer, S., Larochelle, H., Pal, C., and
  Bengio, Y.
\newblock Learning neural causal models from unknown interventions.
\newblock \emph{CoRR}, abs/1910.01075, 2019.
\newblock URL \url{http://arxiv.org/abs/1910.01075}.

\bibitem[Kim \& Mnih(2018)Kim and Mnih]{kimdisentangle}
Kim, H. and Mnih, A.
\newblock Disentangling by factorising.
\newblock In Dy, J.~G. and Krause, A. (eds.), \emph{Proceedings of the 35th
  International Conference on Machine Learning, {ICML} 2018,
  Stockholmsm{\"{a}}ssan, Stockholm, Sweden, July 10-15, 2018}, volume~80 of
  \emph{Proceedings of Machine Learning Research}, pp.\  2654--2663. {PMLR},
  2018.
\newblock URL \url{http://proceedings.mlr.press/v80/kim18b.html}.

\bibitem[Kingma \& Welling(2014)Kingma and Welling]{kingma-vae}
Kingma, D.~P. and Welling, M.
\newblock Auto-encoding variational bayes.
\newblock In Bengio, Y. and LeCun, Y. (eds.), \emph{2nd International
  Conference on Learning Representations, {ICLR} 2014, Banff, AB, Canada, April
  14-16, 2014, Conference Track Proceedings}, 2014.
\newblock URL \url{http://arxiv.org/abs/1312.6114}.

\bibitem[Kingma \& Welling(2019)Kingma and Welling]{kingma2019an}
Kingma, D.~P. and Welling, M.
\newblock An introduction to variational autoencoders.
\newblock \emph{Foundations and Trends in Machine Learning}, 12\penalty0
  (4):\penalty0 307--392, 2019.

\bibitem[Koller \& Friedman(2009)Koller and Friedman]{pgm_book}
Koller, D. and Friedman, N.
\newblock \emph{Probabilistic graphical models: principles and techniques}.
\newblock MIT press, 2009.

\bibitem[Lin(1991)]{jsd}
Lin, J.
\newblock Divergence measures based on the shannon entropy.
\newblock \emph{{IEEE} Trans. Information Theory}, 37\penalty0 (1):\penalty0
  145--151, 1991.
\newblock \doi{10.1109/18.61115}.
\newblock URL \url{https://doi.org/10.1109/18.61115}.

\bibitem[Maddison et~al.(2017)Maddison, Mnih, and Teh]{concrete_dist}
Maddison, C.~J., Mnih, A., and Teh, Y.~W.
\newblock The concrete distribution: {A} continuous relaxation of discrete
  random variables.
\newblock In \emph{5th International Conference on Learning Representations,
  {ICLR} 2017, Toulon, France, April 24-26, 2017, Conference Track
  Proceedings}. OpenReview.net, 2017.
\newblock URL \url{https://openreview.net/forum?id=S1jE5L5gl}.

\bibitem[Mary et~al.(2019)Mary, Calauz{\`{e}}nes, and Karoui]{fair_continuous}
Mary, J., Calauz{\`{e}}nes, C., and Karoui, N.~E.
\newblock Fairness-aware learning for continuous attributes and treatments.
\newblock In Chaudhuri, K. and Salakhutdinov, R. (eds.), \emph{Proceedings of
  the 36th International Conference on Machine Learning, {ICML} 2019, 9-15 June
  2019, Long Beach, California, {USA}}, volume~97 of \emph{Proceedings of
  Machine Learning Research}, pp.\  4382--4391. {PMLR}, 2019.
\newblock URL \url{http://proceedings.mlr.press/v97/mary19a.html}.

\bibitem[Mathieu et~al.(2019)Mathieu, Rainforth, Siddharth, and Teh]{ddvae}
Mathieu, E., Rainforth, T., Siddharth, N., and Teh, Y.~W.
\newblock Disentangling disentanglement in variational autoencoders.
\newblock In Chaudhuri, K. and Salakhutdinov, R. (eds.), \emph{Proceedings of
  the 36th International Conference on Machine Learning, {ICML} 2019, 9-15 June
  2019, Long Beach, California, {USA}}, volume~97 of \emph{Proceedings of
  Machine Learning Research}, pp.\  4402--4412. {PMLR}, 2019.
\newblock URL \url{http://proceedings.mlr.press/v97/mathieu19a.html}.

\bibitem[Nielsen(2019)]{jsd_abs_mean}
Nielsen, F.
\newblock On the jensen--shannon symmetrization of distances relying on
  abstract means.
\newblock \emph{Entropy}, 21\penalty0 (5):\penalty0 485, 2019.

\bibitem[Peters et~al.(2015)Peters, B{\"u}hlmann, and
  Meinshausen]{causal_invariance}
Peters, J., B{\"u}hlmann, P., and Meinshausen, N.
\newblock Causal inference using invariant prediction: identification and
  confidence intervals.
\newblock \emph{Journal of the Royal Statistical Society, Series B (Statistical
  Methodology)}, 78\penalty0 (A):\penalty0 1--42, 2015.
\newblock ISSN 1369-7412.

\bibitem[Poole et~al.(2019)Poole, Ozair, van~den Oord, Alemi, and
  Tucker]{mi_bounds}
Poole, B., Ozair, S., van~den Oord, A., Alemi, A., and Tucker, G.
\newblock On variational bounds of mutual information.
\newblock In Chaudhuri, K. and Salakhutdinov, R. (eds.), \emph{Proceedings of
  the 36th International Conference on Machine Learning, {ICML} 2019, 9-15 June
  2019, Long Beach, California, {USA}}, volume~97 of \emph{Proceedings of
  Machine Learning Research}, pp.\  5171--5180. {PMLR}, 2019.
\newblock URL \url{http://proceedings.mlr.press/v97/poole19a.html}.

\bibitem[Ryu et~al.(2020)Ryu, Choi, Kim, El-Khamy, and Lee]{wynervae}
Ryu, J.~J., Choi, Y., Kim, Y.-H., El-Khamy, M., and Lee, J.
\newblock Wyner {\{}vae{\}}: A variational autoencoder with succinct common
  representation learning.
\newblock \emph{OpenReview}, 2020.
\newblock URL \url{https://openreview.net/forum?id=r1g1CAEKDH}.

\bibitem[Slonim et~al.(2006)Slonim, Friedman, and Tishby]{mib_slonim}
Slonim, N., Friedman, N., and Tishby, N.
\newblock Multivariate information bottleneck.
\newblock \emph{Neural Computation}, 18:\penalty0 1739--1789, 2006.

\bibitem[Song et~al.(2019)Song, Kalluri, Grover, Zhao, and Ermon]{fair_lagvae}
Song, J., Kalluri, P., Grover, A., Zhao, S., and Ermon, S.
\newblock Learning controllable fair representations.
\newblock In Chaudhuri, K. and Sugiyama, M. (eds.), \emph{The 22nd
  International Conference on Artificial Intelligence and Statistics, {AISTATS}
  2019, 16-18 April 2019, Naha, Okinawa, Japan}, volume~89 of \emph{Proceedings
  of Machine Learning Research}, pp.\  2164--2173. {PMLR}, 2019.
\newblock URL \url{http://proceedings.mlr.press/v89/song19a.html}.

\bibitem[Steeg \& Galstyan(2014{\natexlab{a}})Steeg and Galstyan]{corex}
Steeg, G.~V. and Galstyan, A.
\newblock Discovering structure in high-dimensional data through correlation
  explanation.
\newblock In Ghahramani, Z., Welling, M., Cortes, C., Lawrence, N.~D., and
  Weinberger, K.~Q. (eds.), \emph{Advances in Neural Information Processing
  Systems 27: Annual Conference on Neural Information Processing Systems 2014,
  December 8-13 2014, Montreal, Quebec, Canada}, pp.\  577--585,
  2014{\natexlab{a}}.
\newblock URL \url{https://bit.ly/374PJJC}.

\bibitem[Steeg \& Galstyan(2014{\natexlab{b}})Steeg and
  Galstyan]{corex-hierarchical}
Steeg, G.~V. and Galstyan, A.
\newblock Maximally informative hierarchical representations of
  high-dimensional data.
\newblock \emph{CoRR}, abs/1410.7404, 2014{\natexlab{b}}.
\newblock URL \url{http://arxiv.org/abs/1410.7404}.

\bibitem[Steeg \& Galstyan(2016)Steeg and Galstyan]{corex-infosieve}
Steeg, G.~V. and Galstyan, A.
\newblock The information sieve.
\newblock In Balcan, M. and Weinberger, K.~Q. (eds.), \emph{Proceedings of the
  33nd International Conference on Machine Learning, {ICML} 2016, New York
  City, NY, USA, June 19-24, 2016}, volume~48 of \emph{{JMLR} Workshop and
  Conference Proceedings}, pp.\  164--172. JMLR.org, 2016.
\newblock URL \url{http://proceedings.mlr.press/v48/steeg16.html}.

\bibitem[Suzuki et~al.(2017)Suzuki, Nakayama, and Matsuo]{jmvae}
Suzuki, M., Nakayama, K., and Matsuo, Y.
\newblock Joint multimodal learning with deep generative models.
\newblock In \emph{5th International Conference on Learning Representations,
  {ICLR} 2017, Toulon, France, April 24-26, 2017, Workshop Track Proceedings}.
  OpenReview.net, 2017.
\newblock URL \url{https://openreview.net/forum?id=BkL7bONFe}.

\bibitem[Tishby et~al.(2000)Tishby, Pereira, and Bialek]{ib_1999}
Tishby, N., Pereira, F. C.~N., and Bialek, W.
\newblock The information bottleneck method.
\newblock \emph{CoRR}, physics/0004057, 2000.
\newblock URL \url{http://arxiv.org/abs/physics/0004057}.

\bibitem[Tolstikhin et~al.(2018)Tolstikhin, Bousquet, Gelly, and
  Sch{\"{o}}lkopf]{wae}
Tolstikhin, I.~O., Bousquet, O., Gelly, S., and Sch{\"{o}}lkopf, B.
\newblock Wasserstein auto-encoders.
\newblock In \emph{6th International Conference on Learning Representations,
  {ICLR} 2018, Vancouver, BC, Canada, April 30 - May 3, 2018, Conference Track
  Proceedings}. OpenReview.net, 2018.
\newblock URL \url{https://openreview.net/forum?id=HkL7n1-0b}.

\bibitem[Vedantam et~al.(2018)Vedantam, Fischer, Huang, and Murphy]{telbo}
Vedantam, R., Fischer, I., Huang, J., and Murphy, K.
\newblock Generative models of visually grounded imagination.
\newblock In \emph{6th International Conference on Learning Representations,
  {ICLR} 2018, Vancouver, BC, Canada, April 30 - May 3, 2018, Conference Track
  Proceedings}. OpenReview.net, 2018.
\newblock URL \url{https://openreview.net/forum?id=HkCsm6lRb}.

\bibitem[Wang et~al.(2016)Wang, Lee, and Livescu]{bivcca}
Wang, W., Lee, H., and Livescu, K.
\newblock Deep variational canonical correlation analysis.
\newblock \emph{CoRR}, abs/1610.03454, 2016.
\newblock URL \url{http://arxiv.org/abs/1610.03454}.

\bibitem[Wu \& Goodman(2018)Wu and Goodman]{mvae}
Wu, M. and Goodman, N.
\newblock Multimodal generative models for scalable weakly-supervised learning.
\newblock In Bengio, S., Wallach, H.~M., Larochelle, H., Grauman, K.,
  Cesa{-}Bianchi, N., and Garnett, R. (eds.), \emph{Advances in Neural
  Information Processing Systems 31: Annual Conference on Neural Information
  Processing Systems 2018, NeurIPS 2018, 3-8 December 2018, Montr{\'{e}}al,
  Canada}, pp.\  5580--5590, 2018.
\newblock URL \url{https://bit.ly/31sV3W0}.

\bibitem[Zhao et~al.(2018)Zhao, Song, and Ermon]{lagvae}
Zhao, S., Song, J., and Ermon, S.
\newblock A lagrangian perspective on latent variable generative models.
\newblock In Globerson, A. and Silva, R. (eds.), \emph{Proceedings of the
  Thirty-Fourth Conference on Uncertainty in Artificial Intelligence, {UAI}
  2018, Monterey, California, USA, August 6-10, 2018}, pp.\  1031--1041. {AUAI}
  Press, 2018.
\newblock URL \url{http://auai.org/uai2018/proceedings/papers/361.pdf}.

\end{thebibliography}
\bibliographystyle{icml2020}

\clearpage
\appendix
\onecolumn
\section{Proofs}
\subsection{Proofs of results in section ~\ref{sec:framework} \emph{framework}}
\label{ap:framework}

\subsubsection{Generalized MIB objective}
We generalized the original MIB structural variational learning objective in~\eqref{eq:loss}.
We show that by choosing $\sC_1 = \kl{\qq}{\pp},\; T = 1$ and $\gG^1 = \gG^{\emptyset}, \; K=1$, we can recover the original MIB objective~\eqref{eq:mib_loss}.
\begin{prop}
Let $\rX \sim P(\rX)$, and let $\gG^{\emptyset}$ be an empty Bayesian network over $\rX$. Then
\begin{equation}
\projkl{p}{\gG^{\emptyset}} = \min_{q \models \gG} \kl{p}{q} = \cI_{p}(\rX) - \cI^{\gG^{\emptyset}}_{p}(\rX) = \cI_{p}(\rX)
\end{equation}
\end{prop}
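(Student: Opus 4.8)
The plan is to reduce the statement to the M-projection identity already recorded in the excerpt, namely $\projkl{p}{\gG} = \cI_{p}(\rX) - \cI^{\gG}_{p}(\rX)$ for an arbitrary valid Bayesian network $\gG$, and apply it with $\gG = \gG^{\emptyset}$. The only thing that then needs checking is that $\cI^{\gG^{\emptyset}}_{p}(\rX) = 0$. By definition $\cI^{\gG}_{p}(\rX) = \sum_{i=1}^N \cI^{\gG}_{p}\!\left(\rX_i; \parents^{\gG}_{\rX_i}\right)$, and in the empty network every node has an empty parent set, $\parents^{\gG^{\emptyset}}_{\rX_i} = \emptyset$. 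Since the mutual information between a variable and the empty (degenerate) collection of variables is zero, each summand vanishes, so $\cI^{\gG^{\emptyset}}_{p}(\rX) = 0$ and substituting gives $\projkl{p}{\gG^{\emptyset}} = \cI_{p}(\rX) - 0 = \cI_{p}(\rX)$, as claimed.

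For completeness I would also give the direct argument, which makes the role of the multi-information transparent. A distribution $q$ is consistent with $\gG^{\emptyset}$ precisely when it factorizes as $q(\rvx) = \prod_{i=1}^N q_i(\rvx_i)$ for some marginals $q_i$ over $\rX_i$. Writing $p_i$ for the marginal of $p$ on $\rX_i$, one decomposes
\[
\kl{p(\rvx)}{\textstyle\prod_i q_i(\rvx_i)} = \kl{p(\rvx)}{\textstyle\prod_i p_i(\rvx_i)} + \sum_{i=1}^N \kl{p_i}{q_i},
\]
which follows by writing $\log \frac{p(\rvx)}{\prod_i q_i(\rvx_i)} = \log \frac{p(\rvx)}{\prod_i p_i(\rvx_i)} + \sum_i \log \frac{p_i(\rvx_i)}{q_i(\rvx_i)}$ and taking expectations under $p$, using that $\E_{p}\!\left[\log \frac{p_i}{q_i}\right] = \E_{p_i}\!\left[\log \frac{p_i}{q_i}\right]$. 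Each term $\kl{p_i}{q_i} \ge 0$ with equality iff $q_i = p_i$, so the minimum over $q \models \gG^{\emptyset}$ is attained at $q = \prod_i p_i$ and equals $\kl{p(\rvx)}{\prod_i p_i(\rvx_i)}$, which is exactly $\cI_{p}(\rX)$ by the definition of multi-information.

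I do not expect any genuine obstacle here; the statement is essentially a bookkeeping consequence of the cited MIB identity. The only point requiring a word of care is the convention that $\cI^{\gG^{\emptyset}}_{p}\!\left(\rX_i; \parents^{\gG^{\emptyset}}_{\rX_i}\right) = \cI_{p}(\rX_i; \emptyset) = 0$, i.e. that correlating with an empty set of variables contributes nothing; in the direct proof the analogous mild point is Gibbs' inequality used to identify the minimizing product distribution.
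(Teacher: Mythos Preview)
Your proof is correct and matches the paper's approach exactly: the paper's entire argument is the one-line observation that, by definition, $\cI^{\gG^{\emptyset}}_{p}(\rX) = 0$ (since every parent set is empty), combined with the already-stated identity $\projkl{p}{\gG} = \cI_{p}(\rX) - \cI^{\gG}_{p}(\rX)$. Your additional direct decomposition showing the minimizer is $\prod_i p_i$ is a nice bonus but goes beyond what the paper provides.
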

\begin{proof}
By definition, we have $\cI^{\gG^{\emptyset}}_{p}(\rX) = 0$. 
\end{proof}
Then we can see that our objective is equivalent to the original MIB objective~\eqref{eq:mib_loss} when $\alpha_1 = 1, \beta_1 = \gamma$.
\begin{equation}
\Ls = \ldist + \lreg = \alpha_1 \kl{\qq}{\pp} + \beta_1 \projkl{q_\rvphi}{\gG^{\emptyset}} = \alpha_1 \kl{\qq}{\pp} + \beta_1 \cI^{\gG^q}_q
\end{equation}
\subsubsection{Derivation of~\eqref{eq:inference_poe}}
\begin{equation}
\begin{aligned}
\qq(\rvz \mid \rvx^{\sS}) &\propto \pp(\rvz) \prod_{i \in \sS} \frac{\qq(\rvz \mid \rvx_i)}{\pp(\rvz)}\\
&= \pp(\rvz) \prod_{i \in \sS}\prod_{j=1}^M \left(\qqt(\rvz_j \mid \rvx_i) \right)^{\rvm^q_{ij}}\\
&= \prod_{j=1}^M \left(\pp(\rvz_j)\prod_{i \in \sS} \left(\qqt(\rvz_j \mid \rvx_i) \right)^{\rvm^q_{ij}} \right)
\end{aligned}
\end{equation}

\subsubsection{Full table~\ref{tb:unified_models}}
We show the full Table~\ref{tb:unified_models} in Table~\ref{tb:unified_models_full}.
\begin{table*}[t]
\caption{A unified view of \{single/multi\}-\{modal/domain/view\} models}
\label{tb:unified_models_full}
\begin{center}
\begin{small}
\begin{sc}
\begin{tabular}{lccccccl}
\toprule
Models & $N$ & \textcircled{1} & \textcircled{2} & $\gG^q$ & $\gG^p$ &  $\ldist$ & $\lreg$ \\
\midrule
VAE    & $1$ & $\times$ & $\times$ & $\left[\gG^q_{\mathrm{single}}\right]$ & $\left[\gG^p_{\mathrm{single}}\right]$ & $[1, C_1]$ & $[]$\\
\midrule
ICA    & $1$ & $\times$ & $\times$ & $\left[\gG^q_{\mathrm{single}}\right]$ & $\left[\right]$ & $[]$ & $[\beta, \gG^p_{\mathrm{single}}]$\\
\midrule
GAN    & $1$ & $\times$ & $\times$ & [] & $\gG^p_{\mathrm{single}}$ & $[1, C_2]$ & $[]$\\
\midrule
InfoGAN & $1$ & $\times$ & $\times$ & [] & $\gG^p_{\mathrm{single}}$ & $[1, C_2]$ & $[1, \gG^{\mathrm{InfoGAN}}]$\\
\midrule
$\beta$-VAE & $1$ & $\times$ & $\times$ & $\left[\gG^q_{\mathrm{single}}\right]$ & $\left[\gG^p_{\mathrm{single}}\right]$ & $[1, C_1], [\beta-1, C_3]$ & $[\beta - 1, \gG^\emptyset]$\\
\midrule
$\beta$-TCVAE & $1$ & $\times$ & $\times$ & $\left[\gG^q_{\mathrm{single}}\right]$ & $\left[\gG^p_{\mathrm{single}}\right]$ & $[1, C_1], [\alpha_2, C_3]$ & $[\beta, \gG^p]$\\
\midrule
BiVCCA & $2$ & $\times$ & $\times$ & $\left[\gG^q_{\mathrm{marginal}}\right]$ & $\left[\gG^p_{\mathrm{joint}}\right]$ & $[\alpha_i, C_4(\rvx_i, \rvz)]]$ & $[]$\\
\midrule
JMVAE  & $2$ & $\times$ & $\times$ & $\left[\gG^q_{\mathrm{joint}}\right]$ & $\left[\gG^p_{\mathrm{joint}}\right]$ & $[1, C_1]$& $[\beta_i, \gG^{\mathrm{str}}_{\mathrm{cross}}(\rvx_i)]$\\
\midrule
TELBO  & $2$ & $\times$ & $\times$ & $\left[\gG^q_{\mathrm{joint}},\gG^q_{\mathrm{marginal}}\right]$ & $\left[\gG^p_{\mathrm{joint}}\right]$ &  $[1, C_1]$ & $[\beta_i, \gG^{\mathrm{str}}_{\mathrm{marginal}}(\rvx_i)]$\\
\midrule
MVAE   & $N$ & $\times$ & $\times$ & $\left[\gG^q_{\mathrm{joint}},\gG^q_{\mathrm{marginal}}\right]$ & $\left[\gG^p_{\mathrm{joint}}\right]$ & $[1, C_1]$ & $[\beta_i, \gG^{\mathrm{str}}_{\mathrm{marginal}}(\rvx_i)]$\\
\midrule
Wyner  & $2$ & $\checkmark$ & $\times$ & $\left[\gG^q_{\mathrm{joint}},\gG^q_{\mathrm{marginal}}\right]$ & $\left[\gG^p_{\mathrm{joint}}\right]$ & $[1, C_1]$ & $[\beta_i, \gG^{\mathrm{str}}_{\mathrm{cross}}(\rvx_i)],[\beta_i, \gG^{\mathrm{str}}_{\mathrm{private}}(\rvx_i)]$\\
\midrule
DIVA  & $3$ & $\checkmark$ & $\times$ & $\left[\gG^q_{\mathrm{marginal}}\right]$ & $\left[\gG^p_{\mathrm{joint}}\right]$ & $[1, C_1]$ & $[\beta_i, \gG^{\mathrm{str}}_{\mathrm{private}}(\rvx_i)]$ \\
\midrule
OURS-MM & $N$ & $\checkmark$ & $\checkmark$ & $\left[\gG^q_{\mathrm{full}} \right]$ & $\left[\gG^p_{\mathrm{full}} \right]$ & $[1, C_0]$ & $[\beta_i, \gG^{\mathrm{str}}_{\mathrm{cross}}(\{\rvx_i\})]$\\
\bottomrule
\end{tabular}
\end{sc}
\end{small}
\end{center}
\vskip -0.1in
\end{table*}

\subsection{Proof of results in section~\ref{sec:single_model_model} \emph{single-modal generative mode}}
\label{ap:sm}
\subsubsection{Unifying disentangled generative models}
\textbf{$\beta$-VAE}
For $\beta$-vae we have
\begin{equation}
\begin{aligned}
\Ls = &\ldist + \lreg\\
 = & C_1 + (\beta - 1) C_3 + (\beta - 1)\lreg(\gG^{\emptyset})\\
 = & C_1 + (\beta - 1) C_3 + (\beta - 1)\projkl{q_\rvphi}{\gG^{\emptyset}}\\
 = & \E_{\qq} \log \pp(\rvx \mid \rvu) + \E_{\qq}\kl{\qq(\rvu \mid \rvx)}{\pp(\rvu)} + (\beta - 1) \kl{\qq(\rvu)}{\pp(\rvu)} + (\beta - 1)\miq{\rvx}{\rvu}\\
 = & \E_{\qq} \log \pp(\rvx \mid \rvu) + (1 + \beta - 1) \kl{\qq(\rvu)}{\pp(\rvu)} + (1 + \beta - 1) \miq{\rvx}{\rvu}\\
 = & \E_{\qq} \log \pp(\rvx \mid \rvu) + \beta \E_{\qq}\kl{\qq(\rvu \mid \rvx)}{\pp(\rvu)}\\
\equiv & \Ls_{\beta\mathrm{-vae}} 
\end{aligned}
\end{equation}
where we include the structural regularization $\lreg$ using an empty Bayesian network $\gG^{\beta\mathrm{-vae}} \equiv \gG^\emptyset$.
Thus we show that the $\beta$-vae objective is equivalent to imposing another empty Bayesian network structure in the latent space which implies the independent latent factors.

\textbf{TCVAE~\cite{tcvae}}
We further show that how we can unify other total-correlation based disentangled representation learning models~\cite{tcvae,hfvae,kimdisentangle} by explicitly imposing Bayesian structure $\gG^p$ as structural regularization, where a factorized prior distribution is assumed.
\begin{equation}
\begin{aligned}
&\Ls = C_1 + \alpha_2 C_3 + \beta \lreg  \\
&\lreg = \projkl{q_\rvphi}{\gG^p} = \cI^{\gG^q}_q - \cI^{\gG^p}_q = \sum_j^{M} \miq{\rvx}{\rvu_j} - \miq{\rvx}{\rvu} = \cI_q(\rvu) - \cI_q(\rvu \mid \rvx) = \cI_q(\rvu) \equiv TC(\rvu)
\end{aligned}
\end{equation}
Since we assume a factorized posterior distribution $\qq(\rvu \mid \rvx)$, we have $\cI_q(\rvu \mid \rvx) = 0$ in the last line of above objective.
Thus the total-correlation minimization term emerges as a structural regularization term naturally in our framework.

\subsection{Proof of results in section~\ref{sec:multi_modal_model} \emph{multi-modal/domain/view generative model}}
\label{ap:mm}
\subsubsection{Unifying multi-modal/domain/view generative models}
We show that we can obtain several representative multi-modal generative models as special cases of our proposed framework here.

\textbf{JMVAE~\cite{jmvae}}
We can see that the objective of JMVAE is a speacial case of our proposed objective when $N=2$.

\textbf{Wyner-VAE~\cite{wynervae}}~
By using structural regularization $\projkl{\qq}{\gG^{\mathrm{str}}_{\mathrm{cross}}(\rvx_i)}$, we show that we can obtain the mutual information regularization term appeared in the learning objective of Wyner-VAE~\cite{wynervae}
\begin{equation}
\begin{aligned}
\lreg &= \projkl{\qq}{\gG^{\mathrm{str}}_{\mathrm{cross}}(\rvx_i)} = \cI^{\gG^q}_q - \cI^{\gG^{\mathrm{str}}_{\mathrm{cross}}(\rvx)}_q\\
&= \miq{\rvx_1}{\rvu_1} + \miq{\rvx_2}{\rvu_2} + \miq{\rvx_1, \rvx_2}{\rvz} - \miq{\rvx_1}{\rvu_1} - \miq{\rvx_2}{\rvu_2} = \miq{\rvx_1, \rvx_2}{\rvz}\\
\Ls &= \ldist + \lreg = \beta \miq{\rvx_1, \rvx_2}{\rvz} + \ldist \equiv \Ls_{\mathrm{wyner-vae}}
\end{aligned}
\end{equation}

\textbf{CorEx~\cite{corex}}
One of the most interesting model with similar goal to decorrelate observed variables is CorEx~\cite{corex,corex-hierarchical,corex-infosieve,corex-vae}, whose objective is
\begin{equation}
\begin{aligned}
\max _{G_{j}, \qq\left(\rvz_{j} \mid \rvx_{G_{j}}\right)} &\Ls_{CorEx}=\sum_{j=1}^{M} TC\left(\rvx_{G_{j}} \right) - TC\left(\rvx_{G_{j}} \mid \rvz_{j}\right)\\
&\text {s.t.} \quad G_j \cap G_{j^{\prime} \neq j}=\emptyset
\end{aligned}
\end{equation}
For each $1 \le j \le M$, CorEx objective aims to search for a latent variable $\rZ_j$ to achieve maximum total-correlation reduction $TC\left(\rvx_{G_{j}} \right) - TC\left(\rvx_{G_{j}} \mid \rvz_{j}\right)$ of a group of observed variables $\rX_{G_j}$. We use $\rrM^q_{:,j}$ and $\rrM^p_{i, :}$ to represent $G_j$ equivalently, then our objective is
\vspace{-0.4in}
\begin{equation}
\begin{aligned}
& \ldist = \projkl{\qq}{\gG^p} = \cI_q^{\gG^q} - \cI_q^{\gG^p} = \sum_{j=1}^M \miq{\rvz_j}{\rvx^{\rvm^q_j}} - \sum_{i=1}^N \miq{\rvz^{\rvm^p_i}}{\rvx_i}\\
&= \sum_{j=1}^M\sum_{i=1}^N \rvm^q_{ij}\miq{\rvz_j}{\rvx_i} + \sum_{j=1}^M\left[\cI_q(\rvx^{\rvm^q_j} \mid \rvz_j) - \cI_q(\rvx^{\rvm^q_j}) \right] - \sum_{i=1}^N\sum_{j=1}^M \rvm^p_{ij}\miq{\rvz_j}{\rvx_i} - \sum_{i=1}^N\left[\cI_q(\rvz^{\rvm^p_i} \mid \rvx_i) - \cI_q(\rvz^{\rvm^p_i}) \right]\\
&\le \sum_{j=1}^M\left[\cI_q(\rvx^{\rvm^q_j} \mid \rvz_j) - \cI_q(\rvx^{\rvm^q_j}) \right] + \sum_{i=1}^N\I_q(\rvz^{\rvm^p_i})\\
&\equiv -\Ls_{CorEx} + \sum_{i=1}^N\cI_q(\rvz^{\rvm^p_i})
\end{aligned}
\end{equation}
Thus with structural regularization $\gG^p$ we obtained an objective coincides with CorEx-based variational autoencoder~\cite{corex-vae}, which is also upper-bound of original CorEx objective\cite{corex} with additional disentanglement regularization over latent variables.

\subsubsection{Derivation of objective~\eqref{eq:jsd_objective}}
We show the detailed derivation of the learning objective of our multi-domain generative model here.
As introduced in~\ref{sec:multi_modal_model}, we impose $N$ structural regularization for each individual $\rX^{\sS} = \{ \rX_i\}$ as $\projkl{\qq}{\gG^{\mathrm{str}}_{\mathrm{cross}}\left( \left\{ \rvx_i\right\} \right)}$. First we hvae
\begin{prop}
We have following upper-bound
\begin{equation}
\frac{1}{N} \sum_{i=1}^N \projkl{\qq}{\gG^{\mathrm{str}}_{\mathrm{cross}}\left( \left\{ \rvx_i\right\} \right)} \le \Ls_{\rvu} + \sum_{i=1}^N \E_{\qq} \kl{\qq(\rvz \mid \rvx)}{\qq(\rvz \mid \rvx_i)}
\end{equation}
\end{prop}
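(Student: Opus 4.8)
The plan is to reduce the statement to a one-line telescoping computation by using the variational form of the M-projection, $\projkl{\qq}{\gG} = \min_{r \models \gG}\kl{\qq}{r}$, so that exhibiting a \emph{single} distribution consistent with $\gG^{\mathrm{str}}_{\mathrm{cross}}(\{\rvx_i\})$ already upper-bounds the projection. First I would read off the structure of $\gG^{\mathrm{str}}_{\mathrm{cross}}(\{\rvx_i\})$ from Figure~\ref{fig:bn_mvae}: among the observed variables, the shared factor $\rZ$ is attached only to $\rX_i$ (this is the Markov constraint $\rX_i \to \rZ \to (\text{the other }\rX)$), the private factors $\rU_k$ keep their edges to $\rX_k$, and the observed variables keep their mutual edges. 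A distribution is consistent with this network provided $\rZ$ depends on $\rX$ only through $\rX_i$, while the conditional tables of the remaining observed variables and of the $\rU_k$ are allowed to ignore some of their parents.

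For each $i$ I would then take
\[
r_i(\rvx,\rvz,\rvu) \;=\; \pdata(\rvx)\,\Bigl(\textstyle\prod_{k=1}^{N}\pp(\rvu_k)\Bigr)\,\qq\bigl(\rvz \mid \rvx^{\{i\}}\bigr),
\]
where $\qq(\rvz \mid \rvx^{\{i\}})$ is the single-input product-of-experts posterior of Eq.~\ref{eq:inference_poe}, i.e.\ exactly the $\qq(\rvz\mid\rvx_i)$ appearing in the statement; by the remark above, $r_i \models \gG^{\mathrm{str}}_{\mathrm{cross}}(\{\rvx_i\})$. Writing $\qq$ in the factored form of Eq.~\ref{eq:q_inf} (with $\qq(\rvx)=\pdata(\rvx)$, $\qq(\rvu\mid\rvx)=\prod_k\qq(\rvu_k\mid\rvx_k)$, and the full posterior $\qq(\rvz\mid\rvx)$) and substituting both factorizations into $\kl{\qq}{r_i}=\E_{\qq}\log(\qq/r_i)$, the $\pdata(\rvx)$ factors cancel, the $\rU$-factors collapse — using $\pp(\rvu)=\prod_k\pp(\rvu_k)$ from Eq.~\ref{eq:p_gen} — to $\sum_k\E_{\qq(\rvx_k)}\kl{\qq(\rvu_k\mid\rvx_k)}{\pp(\rvu_k)} = \Ls_{\rvu}$, and the $\rZ$-factors give $\E_{\qq}\kl{\qq(\rvz\mid\rvx)}{\qq(\rvz\mid\rvx_i)}$. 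Hence $\projkl{\qq}{\gG^{\mathrm{str}}_{\mathrm{cross}}(\{\rvx_i\})} \le \kl{\qq}{r_i} = \Ls_{\rvu} + \E_{\qq}\kl{\qq(\rvz\mid\rvx)}{\qq(\rvz\mid\rvx_i)}$.

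The conclusion then follows by summing this bound over $i=1,\dots,N$, dividing by $N$, and finally discarding the $\tfrac1N$ in front of the $\rZ$-sum — which is allowed because each summand is a KL divergence, hence nonnegative. The hard part is really the very first step: one must identify $\gG^{\mathrm{str}}_{\mathrm{cross}}(\{\rvx_i\})$ correctly and verify that the proposed $r_i$ genuinely factorizes according to it, the subtle points being that a conditional probability table may legitimately discard parents and that the ``$\qq(\rvz\mid\rvx_i)$'' in the statement is the single-input posterior rather than the true marginal conditional of $\qq$. If one prefers to avoid guessing $r_i$, an equivalent route is to evaluate $\projkl{\qq}{\gG^{\mathrm{str}}_{\mathrm{cross}}(\{\rvx_i\})} = \cI^{\gG^q}_q - \cI^{\gG^{\mathrm{str}}_{\mathrm{cross}}(\{\rvx_i\})}_q$ (valid since $\qq\models\gG^q$), cancel the shared $\rU$- and $\rX$-terms to leave $\miq{\rvz}{\rvx} - \miq{\rvz}{\rvx_i}$ up to nonnegative terms that are discarded, and bound this difference by $\E_{\qq}\kl{\qq(\rvz\mid\rvx)}{\qq(\rvz\mid\rvx_i)}$ using that the true conditional is the minimizer of that expected KL over candidate distributions of $\rZ$; the same two-line finish then applies.
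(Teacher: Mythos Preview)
Your primary argument—upper-bounding each $\projkl{\qq}{\gG^{\mathrm{str}}_{\mathrm{cross}}(\{\rvx_i\})}$ by $\kl{\qq}{r_i}$ for an explicitly constructed $r_i$ and then reading off the two KL pieces—is correct and genuinely different from the paper's proof. The paper never exhibits a comparison distribution; it instead evaluates each projection \emph{exactly} through the multi-information identity $\projkl{\qq}{\gG}=\cI^{\gG^q}_q-\cI^{\gG}_q$, simplifies the average over $i$ to
\[
\miq{\rvu}{\rvx}\;+\;\miq{\rvz}{\rvx}\;-\;\sum_{i=1}^N \miq{\rvz}{\rvx_i},
\]
rewrites each piece as an expected KL against the \emph{true} induced marginals $\qq^{\mathrm{mg}}(\rvu)$ and $\qq^{\mathrm{mg}}(\rvz\mid\rvx_i)$, and only then swaps these for the variational surrogates $\pp(\rvu)$ and $\qq(\rvz\mid\rvx_i)$, dropping the two nonnegative slack terms $\kl{\qq^{\mathrm{mg}}(\rvu)}{\pp(\rvu)}$ and $\sum_i \E_{\qq}\kl{\qq^{\mathrm{mg}}(\rvz\mid\rvx_i)}{\qq(\rvz\mid\rvx_i)}$. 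This is precisely the ``equivalent route'' you sketch in your final paragraph, so you have in fact anticipated the paper's argument as well.

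The trade-off is clear: your direct construction is shorter and avoids all mutual-information bookkeeping, but hinges on correctly identifying the edge set of $\gG^{\mathrm{str}}_{\mathrm{cross}}(\{\rvx_i\})$ and verifying $r_i\models\gG^{\mathrm{str}}_{\mathrm{cross}}$—the step you rightly flag as delicate (in particular, whether the observed variables retain their mutual edges is exactly what determines whether your $\pdata(\rvx)$ factor is admissible). The paper's route is more mechanical and sidesteps that reading of the graph, and it has the extra virtue of displaying the exact projection value before bounding, so one sees precisely where the inequality loses tightness.
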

\begin{proof}
\begin{align*}
&\frac{1}{N} \sum_{i=1}^N \projkl{\qq}{\gG^{\mathrm{str}}_{\mathrm{cross}}\left( \left\{ \rvx_i\right\} \right)} = \miq{\rvu}{\rvx} + \frac{1}{N} \sum_{i=1}^N \left[ \miq{\rvz}{\rvx} -  \miq{\rvz}{\rvx_i} - \sum_{k \neq i}^N \miq{\rvz}{\rvx_k} \right]\\
&= \miq{\rvu}{\rvx} + \frac{1}{N}\sum_{i=1}^N \miq{\rvz}{\rvx} + \frac{1}{N}\sum_{i=1}^N \left[ - \miq{\rvz}{\rvx_i} + \sum_{k \neq i}^N \miq{\rvz}{\rvx_k} \right]\\
&= \miq{\rvu}{\rvx} +  \miq{\rvz}{\rvx} -  \sum_{i=1}^N \miq{\rvz}{\rvx_i}\\
&= \E_{\qq}  \kl{\qq(\rvu \mid \rvx)}{\qq^{\mathrm{mg}}(\rvu)} + \sum_{i=1}^N \E_{\qq} \kl{\qq(\rvz \mid \rvx)}{\qq^{\mathrm{mg}}(\rvz \mid \rvx_i)} \\
&= \E_{\qq}  \kl{\qq(\rvu \mid \rvx)}{\pp(\rvu)} + \sum_{i=1}^N \E_{\qq} \kl{\qq(\rvz \mid \rvx)}{\qq(\rvz \mid \rvx_i)} \\
&- \E_{\qq}  \kl{\qq^{\mathrm{mg}}(\rvu)}{\pp(\rvu)} - \sum_{i=1}^N \E_{\qq} \kl{\qq^{\mathrm{mg}}(\rvz \mid \rvx)}{\qq(\rvz \mid \rvx_i)} \\
&\le \E_{\qq}  \kl{\qq(\rvu \mid \rvx)}{\pp(\rvu)} + \sum_{i=1}^N \E_{\qq} \kl{\qq(\rvz \mid \rvx)}{\qq(\rvz \mid \rvx_i)}\\
&= \Ls_{\rvu} + \sum_{i=1}^N \E_{\qq} \kl{\qq(\rvz \mid \rvx)}{\qq(\rvz \mid \rvx_i)}
\end{align*}
\end{proof}
where $\qq^{\mathrm{mg}}(\rvu) \equiv \E_{\qq} \qq(\rvu \mid \rvx) $ and $\qq^{\mathrm{mg}}(\rvz \mid \rvx_i) = \E_{\qq(\rvx \mid \rvx_i )} \qq(\rvz \mid \rvx)$ denote the induced marginalization of $\qq(\rvx, \rvu, \rvz)$.
Note that by using the above upper-bound, the inference network distribution $\qq(\rvz \mid \rvx_i)$ introduced in~\ref{sec:framework_inference} 
is trained to approximate the true marginalization $\qq^{\mathrm{mg}}(\rvz \mid \rvx)$.
Thus we have following full objective
\begin{align}
\Ls &= \ldist + \lreg = \kl{\qq(\rvx, \rvz, \rvu)}{\pp(\rvx, \rvz, \rvu)} + \frac{1}{N} \sum_{i=1}^N \projkl{\qq}{\gG^{\mathrm{str}}_{\mathrm{cross}}\left( \left\{ \rvx_i\right\} \right)} \nonumber\\
    &= -\E_{\qq(\rvz, \rvu \mid \rvx)}\log \pp(\rvx \mid \rvz, \rvu) \tag{$\Ls_{\rvx}$}\\
    & + \E_{\qq(\rvx)}\kl{\qq(\rvu \mid \rvx)}{\pp(\rvu)} \tag{$\Ls_{\rvu}$} \\
    & + \sum_{i=0}^N \E_{\qq(\rvx)}\kl{\qq(\rvz \mid \rvx)}{\qq(\rvz \mid \rvx_i)} \tag{$\Ls_{\rvz}$} \\
    &\equiv \Ls_{\rvx} + \Ls_{\rvu} + \Ls_{\rvz}
\end{align}
We use $\qq(\rvz \mid \rvx_0) \equiv \pp(\rvz)$ for the simplicity of notations.
We further show that $\Ls_{\rvz}$ can be viewed as a generalized JS-divergence for the reverse KL-divergence~\cite{jsd_abs_mean}.
We decompose $\Ls_{\rvz}$ regarding each latent variable $\rZ_j$, 
\begin{equation}
\begin{aligned}
&\Ls_{\rvz} = \sum_{j=1}^M \Ls_{\rvz_j}, \quad \qq(\rvz_j \mid \rvx) \propto \prod_{i=0}^N \qq(\rvz_j \mid \rvx_i)^{\gamma_{ij}}\\
&\Ls_{\rvz_j} = D^{\mathrm{KL}^\ast}_{\mathrm{JS}}\left(\qq(\rvz_j \mid \rvx_0), \qq(\rvz_j \mid \rvx_1), \ldots, \qq(\rvz_j \mid \rvx_N) \right)\\
&\sum_{i=0}^{N} \gamma_i = 1, \gamma_{0j} = 1 - \sum_{i=1}^N \rvm^q_{ij}, \quad \gamma_{ij} = \rvm^q_{ij} \; i > 0
\end{aligned}
\end{equation}
where we use $\mathrm{KL}^\ast$ to denote the reverse KLD and following the same notation in~\cite{jsd_abs_mean} for the generalized JSD.

\subsection{Proof of results in section~\ref{sec:fairness} \emph{case study: fair representation learning}}
\label{ap:fairness}
We show the detailed derivation of the learning objective~\ref{eq:fair_objective} here.
\begin{equation}
\begin{aligned}
&\Ls = \ldist + \lreg = \kl{\qq(\rvx, \rvz, \rvu)}{\pp(\rvx, \rvz, \rvu)} + \beta_1\projkl{\qq}{\gG^{\mathrm{str}}_{\mathrm{informative}}} + \beta_2\projkl{\qq}{\gG^{\mathrm{str}}_{\mathrm{invariant}}}\\
&= \kl{\qq}{\pp}  + \beta_1 \I_q(\rvx;\rva \mid \rvz) + \beta_2 \miq{\rvz}{\rvu} + const\\
&\le -\E_{\qq}\log \pp(\rvx, \rva \mid \rvz, \rvu) + \beta_2 \miq{\rvz}{\rvu} + (1+\beta_1) \E_{\qq} \kl{\qq(\rvz \mid \rvx, \rva)}{\pp(\rvz)} + const
\end{aligned}
\end{equation}

 We can interpret this derived learning objective as first seeking for a succinct latent representation $\rZ$ that captures the sufficient correlation between $\rX$ and $\rA$, then $\rZ$ is served as a proxy variable to learn an informative representation $\rU$ with all information relevant to $\rA$ eliminated by minimizing $\miq{\rvz}{\rvu}$.
 
\subsection{Details of section~\ref{sec:irm} \emph{case study: invariant risk minimization}}
\label{ap:irm}
We show that the idea in~\cite{irm} can be directly integrated into our proposed framework by imposing stable $\rrM^p$ structure as constraints across environments, measured by gradient-penalty term shown below
\begin{equation}
\begin{aligned}
\Ls_{\mathrm{gp}} &= \ldist + \E_{\qq(\rve)} \mynorm{\nabla_{\rrM^p} \Ls_{\mathrm{score}}}
\end{aligned}
\end{equation}

\section{Experiments}
\subsection{Generative modeling}
\label{ap:exp_mm}
\textbf{Datasets}
Following the same evaluation protocol proposed by previous works~\cite{wynervae,mvae}, we construct the bi-modal datasets MNIST-Label by using the digit label as a second modality, MNIST-SVHN by pairing each image sample in MNIST with another random SVHN image sharing the same digit label and a bi-view dataset MNIST-MNIST-Plus-1 by pairing each MNIST sample $\rX_1$ with another random sample $\rX_2$ correlated as $\mathrm{label}(\rX_1) + 1 = \mathrm{label}(\rX_2)$. We illustrate the data generating process using Bayesian networks in Figure~\ref{fig:bn_mm_gt}.

\begin{figure}[ht]
\vskip 0.2in
\begin{center}
\begin{tabular}{cc}
\includegraphics[width=0.25\columnwidth]{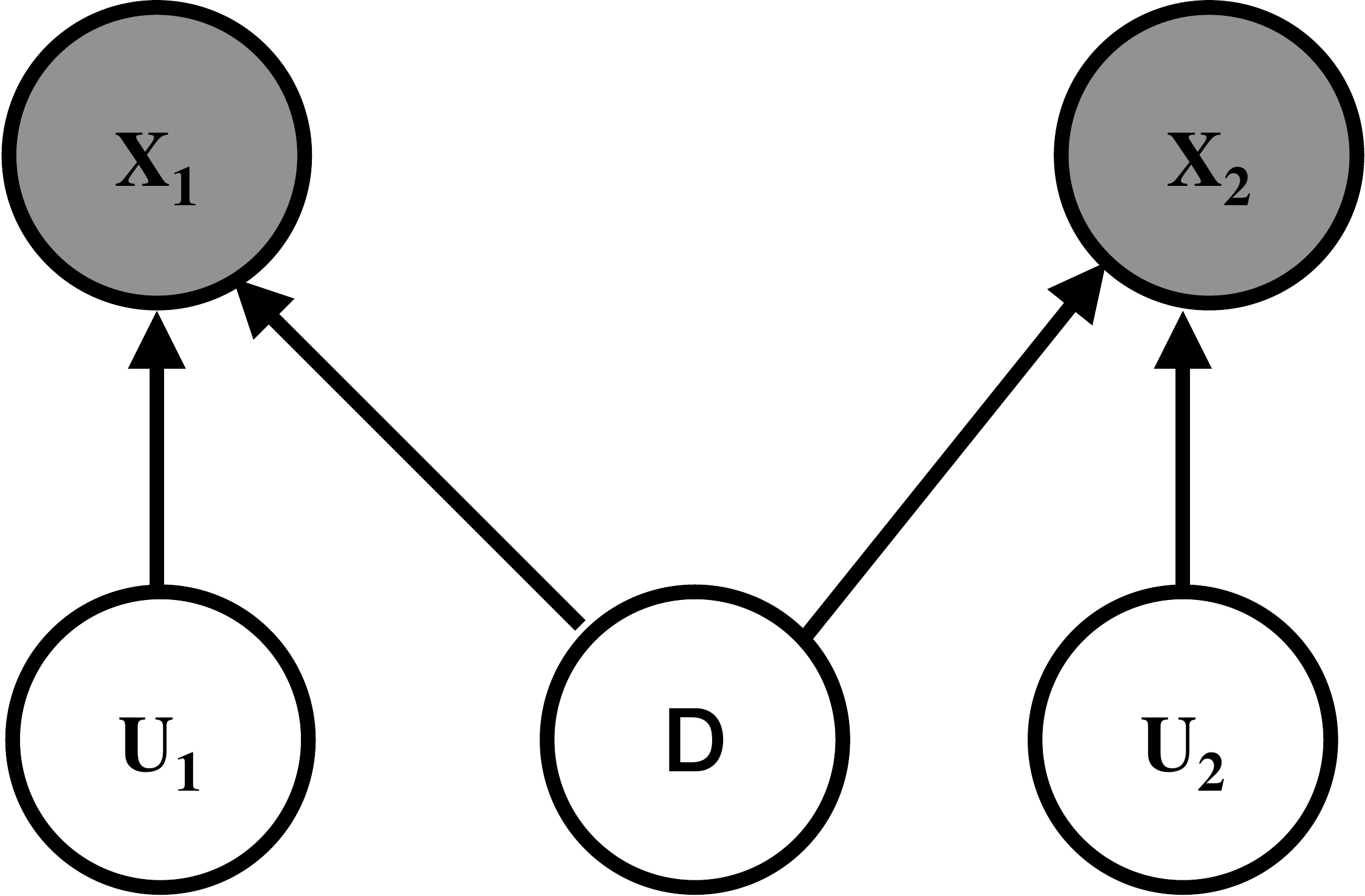} & 
\includegraphics[width=0.33\columnwidth]{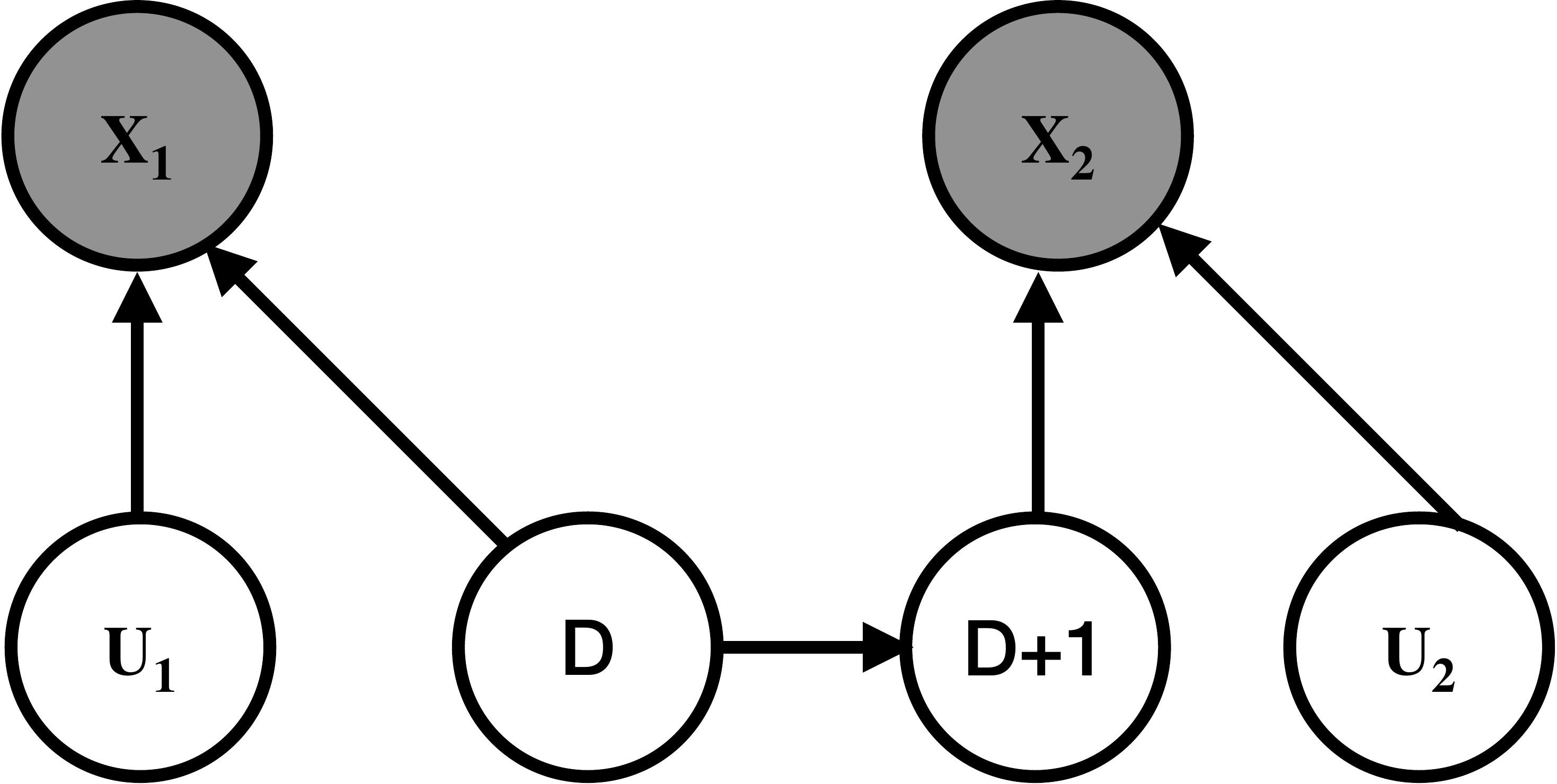}\\
MNIST-SVHN & MNIST-Plus-1\\
\end{tabular}
\caption{Bayesian networks for illustrating the data generating process of MNIST-SVHN dataset and MNIST-PLUS-1 dataset.}
\label{fig:bn_mm_gt}
\end{center}
\vskip -0.2in
\end{figure}

\textbf{Training details and hyper-parameters} For MNIST-Label dataset, we use MLPs with $2$ hidden layers for both encoders and decoders, following the same neural network architecture in~\cite{mvae}. The dimension of $\rZ$ modeling the shared information is $2$. The dimension of $\rU_1$ modeling MNIST image is $20$. We don't include $\rU_2$ in this setting and set the dimension of $\rU_2$ to $0$. For MNIST-SVHN dataset, the dimension of $\rZ$ is $2$, the dimension of $\rU_1$ for MNIST is $20$ and the dimension of $\rU_2$ for SVHN is $20$. For MNIST-MNIST-Plus-1 dataset, the dimension of $\rZ$ is $2$, and the dimension of $\rU_1$ for MNIST is $20$. We train the model using the Adam optimizer with a learning rate starting from $0.001$, and decay the learning rate by a factor $0.1$ whenever a validation loss plateau is found during training. We train the model up to $1000$ epochs for all datasets. We learn the structural variable $\rrM$ with $steps\_dist=1$ and $steps\_str=3$ in all experiments. We use the same neural network architectures for encoder and decoder as~\cite{wynervae} in MNIST-SVHN and MNIST-MNIST-Plus-1 datasets.

\begin{figure}[ht]
\vskip 0.2in
\begin{center}
\begin{tabular}{cc}
\includegraphics[width=0.4\columnwidth]{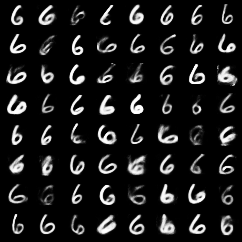} & 
\includegraphics[width=0.4\columnwidth]{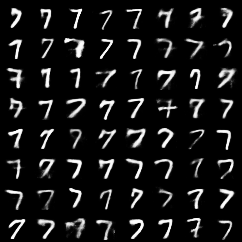}\\
$a$ & $b$
\end{tabular}
\caption{Conditionally generated samples when (a) $label = 6$ and (b) $label= 7$. }
\label{fig:mnist_label}
\end{center}
\vskip -0.2in
\end{figure}

\textbf{Qualitative results of MNIST-Label} Due to the space limit constraint, we include the qualitative results of MNIST-Label experiment here. We show the conditionally generated samples in figure~\ref{fig:mnist_label}.

\subsection{Fairness}
\label{ap:exp_fair}
\textbf{Training details and hyperparameter sensitivity} We follow the same neural network architecture design and evaluation process in~\cite{fair_lagvae}. The dimension of $\rU$ is $10$ for German and Adult datasets, the dimension of $\rZ$ is $5$. We find that the experimental result is not sensitive to the dimension of $\rZ$ when it's in range $2$ to $10$. We train the model up to $10000$ epochs using Adam optimizer with leraning rate $0.001$, and decay the learning rate by a factor $0.1$ when loss plateau is detected. We don't train the structural variables in this experiment. We re-scale the likelihood in objective to make the loss terms balance for the consideration of training stability. Numbers in table~\ref{tb:fair} are evaluated with $10$ random runs with different random seeds.

\subsection{Out-of-Distribution Generalization}
\label{ap:exp_irm}

\begin{figure}[th]
\vskip 0.1in
\begin{center}
\begin{tabular}{cc}
\includegraphics[width=0.45\columnwidth]{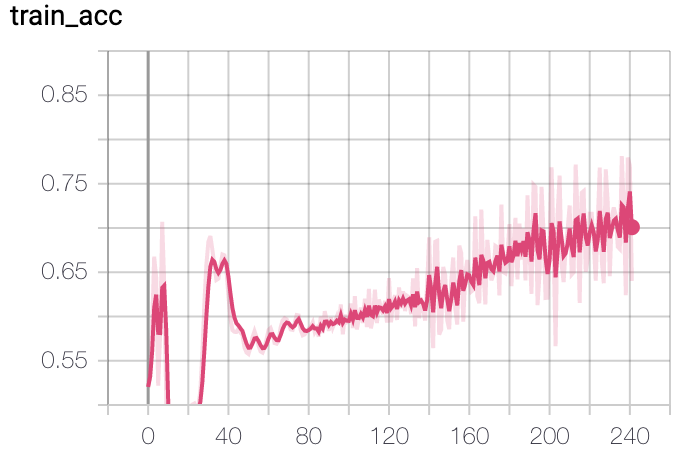} & 
\includegraphics[width=0.45\columnwidth]{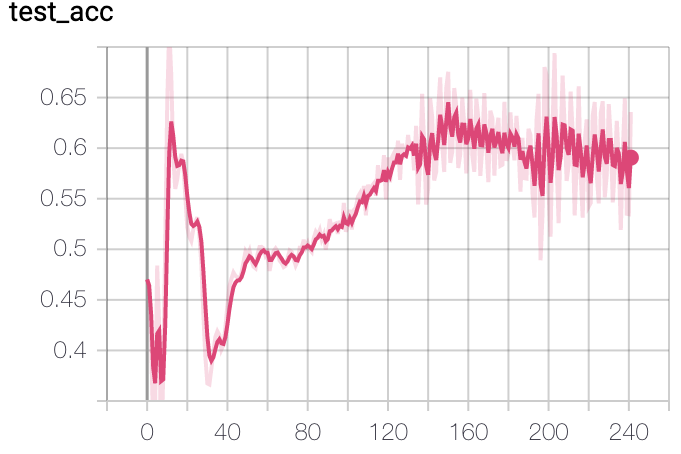}\\
\end{tabular}
\caption{Training environment accuracy (Left) and testing environment accuracy (Right) on Colored-MNIST dataset}
\label{fig:irm_acc}
\end{center}
\vskip -0.2in
\end{figure}

\textbf{Colored MNIST} \textit{Colored MNIST} is an experiment that was used in \citep{irm}, in which the goal is to predict the label of a given digit in the presence of varying exterior factor $e$.
The dataset for this experiment is derived from MNIST. 
Each member of the \textit{Colored MNIST} dataset is constructed from an image-label pair $(x, y)$ in  MNIST, as follows.
\vspace{-0.7\baselineskip} 
{\setlength{\leftmargini}{15pt}
\begin{enumerate}
\setlength{\parskip}{0.025cm}
\setlength{\itemsep}{0.025cm}
    \item Generate a binary label $\hat y_{obs}$ from $y$ with the following rule: $\hat y_{obs}=0$ if $y \in \{0 \sim 4\}$ and  $\hat y_{obs}=1$ otherwise.  
    \item Produce $y_{obs}$ by flipping $\hat y_{obs}$ with a fixed probability $p$.
    \item Let $x_{fig}$ be the binary image corresponding to $y$.
    \item Put $y_{obs}= \hat x_{ch1}$, and construct $x_{ch1}$ from  $\hat x_{ch}$ by flipping $\hat x_{ch1}$ with probability $p_e$.
    \item Construct $x_{obs} = x_{fig} \times  [x_{ch0}, (1-x_{ch0}), 0]$.(that is, make the image red if $x_{ch1}=1$ and green if   $x_{ch1}=0$.) Indeed, $x_{obs}$ has exactly same information as the pair $(x_{fig},  x_{ch1})$. 
\end{enumerate}}
\vspace{-0.7\baselineskip}
The goal of this experiment is to use the dataset with $p_e$ values in small compact range (training dataset) to train a model that can perform well on all ranges of $p_e$.  
In particular, we use the dataset with $p_e \in \{0.1, 0.2\}$
and evaluate the model on the dataset with $p_e= 0.9$. 
For more details of Colored MNIST experiment, please consult the original article. 

\textbf{Training details}~We follow the same neural network architecture design of encoder and evaluation process in~\cite{irm}. The decoder is $1$-layer MLP. We re-scale the likelihood terms to make the gradient norm of each one stays in the same magnitude. We train the model in a full-batch training manner, that the batch size is $50000$. For semi-supervised training, we randomly partitioned the dataset into two halfs and alternate between training $(\rX, \rE, \rY)$ and $(\rX, \rE)$. The dimension of $\rZ$ is $4$. Following the same practice in~\cite{irm}, we use early-stopping on validation set as regularization. Numbers in table~\ref{tb:irm} are evaluated with $10$ random runs with different random seeds. We illustrate the training dynamics of our model by plotting the accuracy progression in both training environments and testing environment in Figure~\ref{fig:irm_acc}.


\end{document}